\theoremstyle{remark}
\newtheorem{thm}{Theorem}
\newtheorem{prop}[thm]{Proposition}
\newtheorem{cor}[thm]{Corollary}
\newtheorem*{cor*}{Corollary}
\newtheorem*{ass*}{Assumption}
\theoremstyle{definition}
\newtheorem*{rem*}{Remark}
\newtheorem{lem}[thm]{Lemma}
\newtheoremstyle{named}{}{}{\itshape}{}{\bfseries}{.}{.5em}{\thmnote{#3}}
\theoremstyle{named}
\newcommand{\softmax}{\operatorname{softmax}}
\definecolor{finesky}{HTML}{E6F5F0}
\definecolor{lightgray}{HTML}{ECECEC}
\definecolor{lightcyan}{rgb}{0.88,1,1}
\journal{Neurocomputing}
\begin{document}

\begin{frontmatter}



\title{Periocular Embedding Learning with Consistent Knowledge Distillation from Face}

\author[orgx]{Yoon Gyo Jung\corref{contrib}}
\ead{jung.yoo@northeastern.edu}
\author[orga]{Jaewoo Park\corref{contrib}}
\ead{park.jaewoo@aiv.ai}
\author[orgz]{Cheng Yaw Low}
\ead{chengyawlow@ibs.re.kr}
\author[orgv]{Jacky Chen Long Chai}
\ead{jackychai920@gmail.com}
\author[orgv]{Leslie Ching Ow Tiong}
\ead{leslie.tiong@samsung.com}
\author[orgy]{Andrew Beng Jin Teoh\corref{corauthor}}
\ead{bjteoh@yonsei.ac.kr}

\cortext[contrib]{Equal contribution}
\cortext[corauthor]{Corresponding author}

%
%

\address[orgx]{Electrical and Computer Engineering Department Northeastern University, Boston, United States of America}
\address[orga]{AiV Research, AiV Co., Seoul, South Korea}
\address[orgi]{Data Science Group, Institute for Basic Science, South Korea}
\address[orgs]{Mechatronics Research, Samsung Electronics Co., Ltd., Hwaseong-si, Gyeonggi-do, South Korea}
\address[orgy]{Electrical and Electronic Engineering Department, Yonsei University, Seoul, South Korea}

%

\begin{abstract}
The periocular area, which refers to the peripheral area of the ocular, is a valid biometric for situations where facial recognition is not possible due to occlusion or masking. However, periocular biometrics alone can reduce discriminative information, particularly in wild environments.
To address this, we propose Consistent Knowledge Distillation (CKD) that transfers discriminatory information from face images to periocular embeddings using temperature-based consistency. CKD achieves state-of-the-art results on challenging unconstrained periocular recognition benchmarks, improving performance by 49-54\% in relative gain. Furthermore, we provide insight into how CKD effectively extracts and transfers global inter-class relationship information by showing that CKD is equivalent to a learned-label smoothing approach with a novel sparsity-oriented regularizer.
\end{abstract}




\begin{keyword}


Periocular recognition, biometric identification, knowledge distillation.
\end{keyword}

\end{frontmatter}


\section{Introduction}

Biometrics is the physiological or behavioral unique characteristic of individuals such as fingerprint, face, iris, retina, etc. Besides the universally accepted fingerprint, biometric systems exploiting ocular traits, including the iris and retina, have also accomplished a significant breakthrough in the past decades. Recently, periocular-based biometrics, or peripheral area of ocular, encapsulating eyebrow, eyelid, eyelash, eye shape, tear duct, and skin texture, captures many unique features. Due to this, it has gained increasing attention as substitutional biometrics to face \cite{nigam2015ocular,rattani2017ocular,alonso2016survey}. Also, as a face component, it can be utilized compositionally for recognition in heterogeneous face domains such as sketch recognition \cite{liu2018composite, liu2020coupled, liu2021iterative}. 

\begin{figure}
\centering
\includegraphics[width=0.995\linewidth]{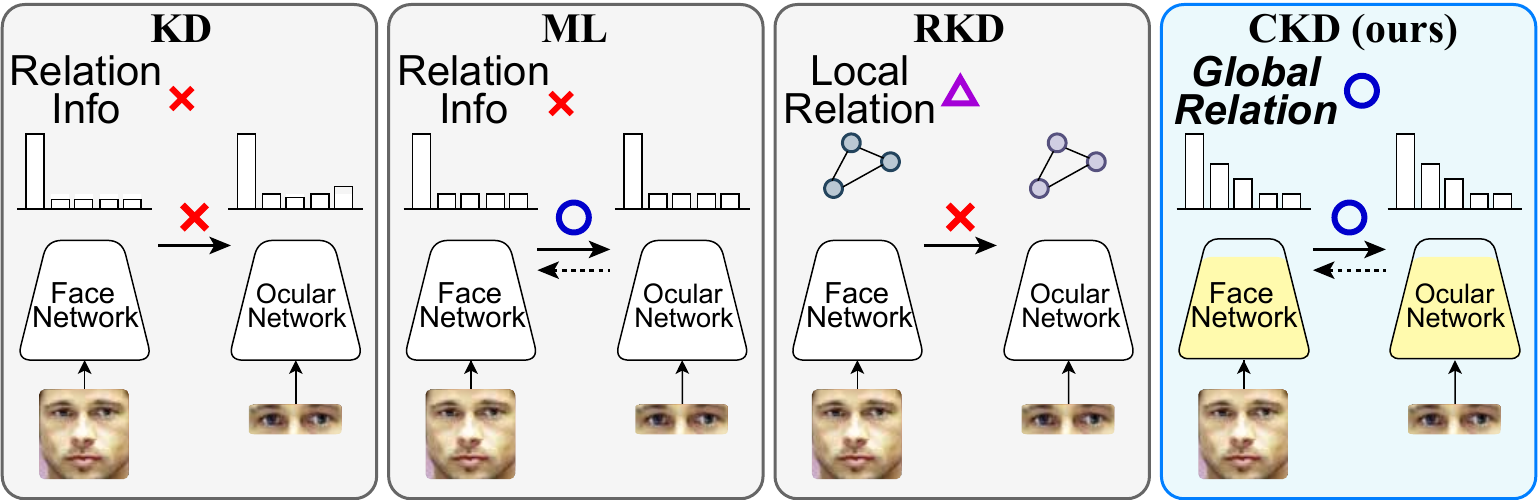}
\caption{
Comparison of different knowledge distillation methods. Knowledge distillation (KD) neither captures discriminative relationship information from face images nor effectively transfers it. Mutual learning (ML) only resolves information transfer. Relational knowledge distillation (RKD) can capture relationships between faces but only locally, and its transfer is ineffective. \textit{Our CKD can capture the global inter-class relationships between faces and effectively transfer that information to the periocular network.} Moreover, by consistency in feature layers via shared weights batch statistics, CKD extracts periocular features robust against identity-irrelevant attributes.
}
\label{fig:comp_methods}
\end{figure}

Periocular biometrics can be a valuable alternative to face for identity recognition, mainly when the facial features are not fully available due to occlusion, makeup, and plastic surgery \cite{alonso2016survey}. Under unconstrained environments, facial parts complementing periocular may likely involve attributes highly irrelevant to the individual identity traits. Moreover, due to the outbreak of the COVID-19 pandemic, individuals wearing masks are common, and masked face recognition started receiving attention \cite{huber2021mask}. These conditions strongly motivate an identity recognition system based solely on periocular biometric images.

The primary reason for the feasibility of using sole periocular biometrics for identity recognition is that the periocular region of a face contains meaningful details strongly corresponding to individual identity traits. Moreover, given a face dataset, a large-scale of periocular images is easily acquired by cropping the periocular region from face images with respect to the detected facial landmarks \cite{sun2013deep}.

Identification or verification of an individual solely based on one's periocular biometric, however, can be challenging, especially for weak hand-crafted models \cite{alonso2016survey,sun2013deep,park2010periocular,karahan2014identification,mahalingam2013lbp,cao2016fusion,xu2010robust}, due to absence of lower face and hair shape, inter-class discriminative traits may not be evident in periocular images even in human eyes. Moreover, in wild unconstrained settings, the periocular images possess diverse identity-irrelevant attributes with a large intra-class variation. The co-modulation of identity-irrelevant distracting factors and insufficient inter-class discriminant information are critical parts of periocular recognition.

To this end, deep neural networks (DNN) \cite{tiong2019periocular,long2015fully,proencca2017deep,zhao2018improving} can be a valid option due to their strong modeling capacity. Vanilla DNNs, however, are likely to overfit the training data, making their deployment to practical environments problematic. To mitigate such issues, \cite{tiong2019periocular,long2015fully,proencca2017deep,zhao2018improving} utilize face-periocular pairs as a type of data augmentation to both regularize the overfitting and complement scarce discriminative information of the sole periocular biometrics. However, these methods require face-periocular pair during deployment.

\begin{figure*}
\centering
\includegraphics[width=0.8\linewidth]{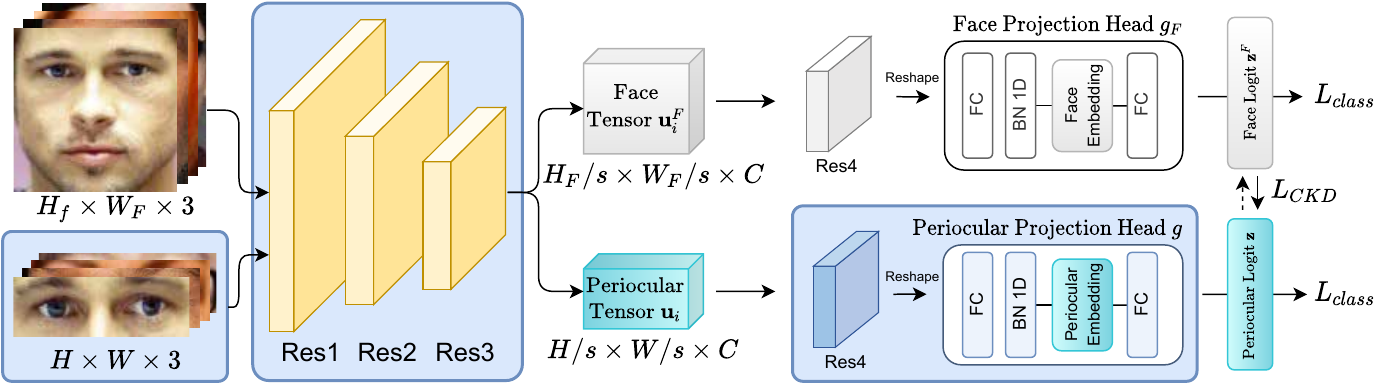}
\caption{The network architecture of CKD. A paired input contains a face and a periocular region feed forwarded to a shared-weights network, followed by respective projection heads. The whole network in the figure is trained, and only the colored area is used in the testing stage for periocular recognition.}
\label{fig:network}
\end{figure*}

On the other hand, \cite{jung2020glsr} supplements face images to the periocular network in a manner of knowledge distillation (KD). In \cite{jung2020glsr}, face images are used only during the training. The sole periocular image is streamed through the network for identification and verification for deployment. The work \cite{jung2020glsr} showcases that transferring discriminative information from face images can enhance periocular recognition.

We assert, however, that usage of KDs such as \cite{jung2020l2sr} can be suboptimal for our objective. In particular, as we show in the experiments, the application of naive KDs can neither fully extract profound inter-class discriminative information from the face nor effectively transfer that information to the periocular network. Moreover, the original KD is not meant to be robust against identity-irrelevant attributes at embedding learning levels.



To resolve these issues, we propose Consistent Knowledge Distillation that imposes temperature-based consistency between face and periocular network signals across prediction and feature layers. Imposing temperature-based consistency at the prediction layer enables
(1) extraction of profound inter-class discriminative relationship information from face images at a global scale and (2) effectual transfer of that structural information from face images to the periocular network. Notably, the prediction consistency stimulates the face network to store inter-class relationships in the non-target posterior where the non-target posterior indicates class prediction probability not corresponding to the target class identity. Then, this relationship information is transferred to the periocular network again by prediction consistency.



On the other hand, consistency is substantiated at the feature layers by sharing network weights and batch-norm statistics, which enables (3) the periocular network to learn features consistent across both face and periocular and, therefore, robust against identity-irrelevant features.
Overall, the effective transfer of consistent knowledge over prediction and feature layers enables the periocular network to produce robust embeddings for periocular recognition in wild environments.

We theoretically validate the principal parts of CKD mentioned in the above points (1) and (2). In the meantime, we discover that CKD is equivalent to learned-label smoothing \cite{jung2020l2sr} accompanied by a novel sparsity-oriented regularizer. We prove that this regularizer is necessary and sufficient for profound inter-class relationship learning from face images. The theory is accompanied by extensive empirical support.


Unlike the typical KD, CKD is a single-stage training regimen without pretraining and involves only one hyperparameter. With much simpler training, CKD achieves state-of-the-art performance across diverse unconstrained periocular recognition datasets with significant gaps to the typical KD application.

The contributions of our work are summarized as follows:
\begin{itemize}
\item
To tackle the periocular recognition problem, we propose a novel CKD method that enhances the periocular embeddings by the relational information in face images. 
CKD imposes \textit{temperature-based consistency} between the face and periocular network. This extracts inter-class information between whole train identities from faces and transfers it to the periocular network, improving its robustness. (Sec.~\ref{sec:method})

\item
Our theoretical analysis demonstrates that the temperature-based consistency loss used in CKD is equivalent to a learned-label smoothing approach that incorporates a novel sparsity-oriented regularizer. This equivalence explains (1) how CKD can extract inter-class relationships from face images and (2) the effective transfer of this relational information to the periocular network. (Sec.~\ref{sec:theory})

\item
CKD is a single-stage knowledge distillation method with a single hyperparameter. It achieves state-of-the-art performance over six standard periocular recognition benchmark datasets, improving its periocular baseline by 49-54\% relative gains. In addition, it consistently outperforms all other periocular recognition methods by significant margins. (Sec.~\ref{sec:exp})


\end{itemize}




\section{Related Works}
This section reviews periocular recognition literature covering hand-crafted and deep learning-based methods, followed by various knowledge distillation techniques relevant to our proposed method.

\subsection{Periocular Biometric}

Early works in periocular biometric recognition are primarily based on hand-crafted texture descriptors such as Histogram of Orientation and Gradient (HOG), Scale Invariant Feature Transform (SIFT), and Local Binary Pattern (LBP) \cite{park2010periocular,karahan2014identification,mahalingam2013lbp}. Further studies combined several hand-crafted texture descriptors by fusing, for example, the Gabor Filter with HOG and LBP \cite{cao2016fusion}. A survey study \cite{xu2010robust} examines a variety of texture descriptors based on hand-craft, including Walsh Hadamard product LBP (WLBP), Laws Masks, Discrete Cosine Transform (DCT), Discrete Wavelet Transform (DWT), Force Field transform, Speed-Up Robust Features (SURF), Gabor filters, and Laplacian of Gaussian (LoG) filters. Although some accuracy performance gain is observed by utilization of those descriptors, they are shown to suffer from external distractors (aging degradation, pose invariance, and illumination changes).
The earlier studies reveal that hand-crafted features are ineffective for periocular recognition in unconstrained wild environments.

The recent works in periocular recognition utilize deep Convolutional Neural Networks (CNN) to \textit{learn features} rather than hand-crafting it. For instance, \cite{proencca2017deep} trains a CNN on augmented artificial periocular image samples crafted by interchanging ocular regions of different identities. 
On the other hand, motivated by the human visual attention mechanism that focuses on the eyebrow and ocular area of the periocular image, \cite{zhao2018improving} devises a two-stage scheme, where
The CNN learns from periocular images under the guidance of pretrained attention maps. However, the network of \cite{zhao2018improving} fails under unconstrained environments due to severe dependency on the pretrained attention map.
\cite{zanlorensi2020unconstrained} adopted generative adversarial networks (GAN) \cite{goodfellow2014generative} to remove the noisy attributes such as glasses and eye gaze, leaving only the discriminative attributes meaningful to recognition purposes. \cite{luo2021deep} opts iris to fuse it with periocular to enhance the attention mechanism of the network. \cite{borza2023adaptive} equips an attention-based local branch to focus more on the salient periocular features. These works, however, have not considered transferring informative information from data such as face to improve performance. \cite{talreja2022attribute} utilizes soft biometrics of subjects to improve periocular recognition performance, but their method is trained in 3 stages.

Near-infrared (NIR) periocular recognition also has started to gain attention for periocular recognition. \cite{hwang2020near} fuses globally pooled features with local features extracted from NIR periocular images. \cite{behera2020variance} aims to extract identity-relevant features by cross-matching the NIR periocular image with the corresponding RGB image under the high-level attention module and variance-guided loss function. However, the NIR-based methods \cite{hwang2020near,behera2020variance} underperform in unconstrained settings. 

Considering the periocular recognition in the wild challenge, \cite{tiong2019periocular} outlines a dual-stream CNN that encodes the Orthogona-Local Binary Coded Pattern descriptor – a composition of the well-established Local Binary Pattern and the Local Ternary Patterns (LTP).
On the contrary, \cite{jung2020l2sr} tackles this problem by regularizing the ground truth labels with a pre-task periocular network. In contrast, the periocular network of \cite{jung2020glsr} stands out from the state of the arts by predicting the soft targets learned from faces. The course of predicting the soft targets of the face in \cite{jung2020glsr} has an equivalence to knowledge transfer in KD \cite{yuan2020revisiting}. \cite{ng2022conditional} trains a joint face and periocular network conditioning face to periocular using a face-periocular contrastive loss. These works transfer knowledge from face to periocular but don't consider the important global relational information between face and periocular, which is very effective.

\subsection{Knowledge Distillation (KD)}
Earlier works \cite{hinton2015distilling} in KD are focused on model compression; KD transfers knowledge from a large pretrained model (teacher) to a new small compressed model (student). The conventional scheme for knowledge transfer is by mimicking the logit of larger network logit. Still, other KD methods showcase knowledge transfer by mimicking different types of network signals in the teacher network, such as attention map \cite{zagoruyko2016paying}, intermediate layer features, and/or other structural/statistical outputs \cite{zhang2019your,tung2019similarity,park2019relational,passalis2018learning} derived from the network.


\noindent
\textbf{KD to enhance biometric recognition.}
{

KD used in the biometrics is presented in \cite{wang2018discover}, which improves face recognition tasks by minimizing the L2 distance of normalized features between teacher and student penultimate layers. Especially, KD in biometrics is widely used for low-resolution face recognition. \cite{karlekar2019deep} enhances low-resolution face recognition by distilling knowledge from a network trained on high-resolution face images. Moreover, \cite{ge2018low} selectively distills information from teachers trained with high-resolution images to low-resolution student networks. \cite{ge2020efficient} distills in two steps where the first is a cross-dataset distillation from a high-resolution teacher trained with the large private dataset to an additional branch from the same backbone distilled with the public dataset, and the second is supervising the low-resolution student with the teacher branch trained with the public dataset. 
}

\noindent
\textbf{Other KD works relevant to our model CKD.}
The analytical works \cite{tang2020understanding,yuan2020revisiting,kobayashi2022extractive,kim2021comparing} observed that KD is closely related to label smoothing. On top of these observations, we, for the first time, theoretically verify that KD is \textit{precisely equal} to label smoothing with \textit{a sparsity-oriented regularizer}. The regularizer prevents over-smoothing of predictions and helps the network capture relational information of subject identities. The property of the regularizer motivates us to apply KD on both face and periocular networks (hence the name CKD), thereby extracting relational information from the face and effectively transferring it to the periocular network.

Mutual Learning (ML) \cite{zhang2018deep} is related to our CKD in the aspect of posterior alignment by bi-directional distillation losses (Fig.~\ref{fig:comp_methods}), but the original works utilize only a single domain dataset. Direct application of ML to our problem is feasible. However, due to the absence of the smoothing factor of the prediction logit, the face network of ML would not learn relational information well. We empirically show in Sec.~\ref{sec:exp} that ML largely underperforms compared to our CKD.

Relational Knowledge Distillation (RKD) transfers metric (or angular) relations between embedding vectors from one network to another (Fig.~\ref{fig:comp_methods}). Applying RKD to our objective (periocular recognition enhancement) is also an option. RKD, however, does not involve bi-directional knowledge transfer and captures relational information only at a local scale. Thus, knowledge transfer of RKD is weak, and relational information extracted from face images by RKD can be insufficient. We experimentally found that local relationship transfer by RKD initially meant for a single data type degrades the performance due to data discrepancy between face and periocular.

\section{Method: Consistent Knowledge Distillation}
\label{sec:method}

CKD enhances the periocular network by transferring the knowledge of the face consistent with that of the periocular with simultaneous training of the face network that serves as a teacher. As CKD imposes \textit{temperature-based consistency} between predictions of periocular and face networks (Sec.~\ref{sec:method_pred}), the profound inter-class relationship is extracted from face images (Theorem \ref{thm:ckd_smooth}), and is effectively transferred to the periocular network via the prediction layer (Proposition \ref{thm:align}). Moreover, consistent knowledge sharing in the feature layers (Sec.~\ref{sec:method_fea}) improves the periocular network for robust recognition in wild environments.

\subsection{Base Settings of CKD}
Before delving into the specific details of the distillation mechanisms of CKD, we first describe the basic settings of CKD. The train dataset $\{ (\mathbf{x}_i, \mathbf{x}^F_i, y_i) \}_{i=1}^N$ consists of a pair of periocular image $\mathbf{x}_i {\in} \mathbb{R}^{H \times W \times 3}$, face image $\mathbf{x}^F_i {\in} \mathbb{R}^{H_F \times W_F \times 3}$, and the corresponding identity label $y_i {\in} \{ 1, \dots, K \}$. The face and periocular networks compute posterior distributions of identity, $\mathbf{p}_i {:=} \softmax(\mathbf{z}_i / \tau)$ and $\mathbf{p}^F_i {:=} \softmax(\mathbf{z}^F_i / \tau)$, which are defined as the softmax outputs of the corresponding logit vectors $\mathbf{z}_i \in \mathbb{R}^K$ and $\mathbf{z}^F_i \in \mathbb{R}^K$ of $\mathbf{x}_i$ and $\mathbf{x}^F_i$, respectively. Here, $\tau$ is the temperature that serves as a smoothing factor of the logit.

For the posteriors to model the class identity distributions of face and periocular, they are trained by minimizing the classification loss based on the standard cross entropy
\begin{equation}
L_{class}=
-  \log p_{i,y_i}
-  \log p^F_{i,y_i}
\end{equation}
where $p_{i,k}$ is the $k$-th component of the posterior $\mathbf{p}_i = (p_{i,1}, \dots, p_{i,K}) \in \mathbb{R}^K$, and likewise for $p^F_{i,k}$. We note that during the computation of the classification loss $L_{class}$, the temperature $\tau$ of the logits is set to $1$.

\subsection{CKD at Prediction Layer}
\label{sec:method_pred}
At the prediction layer, CKD transfers consistent knowledge of the face to the periocular network by minimizing the symmetric statistical distance between the posterior outputs of the face and periocular. In particular, the CKD model minimizes \textit{the temperature-based consistency loss} \cite{hinton2015distilling}:
\begin{equation}
\label{eq:ckd_loss}
L_{CKD} =
\tau^2 D_{KL}( \mathbf{p}^F_i \parallel \mathbf{p}_i ) + \tau^2 D_{KL}( \mathbf{p}_i \parallel \mathbf{p}^F_i )
\end{equation}
where $D_{KL}$ is the Kullback–Leibler divergence between two probability distributions. We note that in the computation of the KL divergence $D_{KL}(\mathbf{p}^F_i \parallel \mathbf{p}_i)$, stop-gradient operation is applied to its first variable $\mathbf{p}^F_i$, and likewise for the first variable $\mathbf{p}_i$ of $D_{KL}(\mathbf{p}_i \parallel \mathbf{p}^F_i)$.

Minimizing the bi-directional KL divergences in the CKD loss $L_{CKD}$ effectuates precise alignment between face and periocular posteriors, making both networks capture knowledge relevant to subject identity and their relational information. Technically, the CKD loss accompanied by the classification loss serves to smooth the face posterior $\mathbf{p}^F_i$ and regularize it to capture \textit{global inter-class relationship} from face images (Theorem \ref{thm:ckd_smooth} and Proposition \ref{thm:reg}).
Moreover, through the posterior alignment by minimization of $L_{CKD}$ (Proposition \ref{thm:align}), the knowledge of the face is \textit{transferred to a full extent} to the periocular network. Overall, the CKD loss effectively transfers inter-class relationship information from face images to the periocular network, thereby enhancing periocular recognition in the wild.


\subsection{CKD at Feature Layers}
\label{sec:method_fea}
Face and periocular images of a corresponding identity share the same intrinsic image features to a large extent. Thus, extracting their image features based on independent extraction mechanisms would be inefficient and possibly vulnerable to overfitting (Table~\ref{table:model_ablation}).

We also propose sharing consistent knowledge in the feature extraction layers to prevent such disadvantages.
As shown in Fig.~\ref{fig:network}, at the early stage of feature extraction, both face and periocular images are processed by the shared convolution kernels and shared batch normalization layers. Thus, consistency is realized at the feature layers by applying the same network weights. Then, at the final stage of feature extraction, two different projection heads are used to face and periocular intermediate layer features, respectively, to capture features specific to each different biometric characteristic of the face and periocular.

Technically, given a batch $\{ (\mathbf{x}_i, \mathbf{x}^F_i) \}_{i=1}^B$ of the face and periocular image pair, their features are extracted by a shared network $f$
\begin{equation}
\label{eq:shared}
\mathbf{u}_i = f(\mathbf{x}_i), \quad \mathbf{u}^F_i = f(\mathbf{x}^F_i)
\end{equation}
where $f = f(\cdot; \theta)$ consists of shared weights (SW) $\theta$.
Here, the shared network $f$ extracts consistent features from face and periocular images using convolutions and batch normalizations.


During the extraction process Eq.~\eqref{eq:shared} by the shared network $f$, batch normalization is applied to the face and periocular pair with shared batch statistics (SBS). In particular, the batch statistics are computed on the reshaped batch by treating $[\mathbf{x}_i; \mathbf{x}^F_i]_{i=1} \in \mathbb{R}^{(B \times [(\frac{H}{s}  \times \frac{W}{s}) + (\frac{H_F}{s} \times \frac{W_F}{s})]) \times C}$ as a stack of $C$-dimensional vector. (Here, we assume the channel sizes $C$ and $C_F$ of face and periocular are the same, $\mathbf{x}_i$ and $\mathbf{x}_i^F$ are intermediate layer features for the time being, and $s$ is a stride factor of that intermediate layer.) Thus, batch mean $\boldsymbol{\mu} \in \mathbb{R}^C$ and deviation $\boldsymbol{\sigma} \in \mathbb{R}^C$ are
\begin{equation}
\mu_c =
\frac{
\sum_{i, h, w} x_{i, h, w, c} +
\sum_{i, h, w} x^F_{i, h, w, c}
}{
B \cdot [(\frac{H}{s} \cdot \frac{W}{s}) + (\frac{H_F}{s} \cdot \frac{W_F}{s})]
}
\end{equation}
for each $c=1,\dots,C$, and likewise for the channel-wise deviation $\sigma_c$.  Then, the final embeddings are computed by applying separated projection head $g$ and $g_F$; namely, $\mathbf{z}_i = g(\mathbf{u}_i)$ and $\mathbf{z}^F_i = g_F(\mathbf{u}^F_i)$ to capture features specific to each biometrics.

Overall, CKD substantiates consistency at feature layers by imposing shared weights (SW) and shared batch statistics (SBS), thereby extracting periocular embedding consistent with face images and robust for periocular recognition in wild environments.

\subsection{Full Objective of CKD}
Overall, the CKD model is trained by minimizing the following full objective:
\begin{equation}
L_{full} =
L_{class} + L_{CKD}
\end{equation}
where $L_{CKD}$ given as in Eq.~\eqref{eq:ckd_loss} imposes consistent knowledge at the classification layer.
In the computation of these losses,
the shared network $f$ processes face and periocular images through SW and SBS, extracting consistent features from them.

The CKD model contains only one hyperparameter, the temperature $\tau$ that smooths the logits in the CKD loss $L_{CKD}$. The temperature $\tau$ in the computation of $L_{CKD}$ is fixed to $2.5$ unless specified otherwise.

\subsection{Identification and Verification}
Upon training completion, the face projection head is dispensed. In other words, the identification and verification involve only the shared-weight networks and the periocular projection head, and only periocular images are used.

The identification task requires gallery and probe sets. The gallery is collected by forwarding the gallery images to the trained periocular network and extracting the periocular embedding vectors $\mathbf{v}_G$ paired with each of their identities. Then, the probe images are forwarded to the network, and feature vectors $\mathbf{v}_P$ is compared with every $\mathbf{v}_G$ with the cosine similarity. The identity with the highest similarity is the prediction.

Unlike 1-to-N matching in the identification task, the 1-to-1 verification task determines if the two periocular images of a pair belong to the same (positive) or different (negative) identities. Paired images are computed with the cosine similarity, and a Receiver Operating Characteristic (ROC) curve is obtained with an Equal Error Rate (EER) based on the calculated similarity values.

\section{Theoretical Analysis}
\label{sec:theory}
In this section, we observe the following theoretical principles: (1) The total objective of CKD can be decomposed into the learned-label smoothing part and a novel sparsity-oriented regularizer. The prediction smoothing part with learned labels increases the entropy of the posterior, enabling the posterior to capture the discriminative relationship. On the other hand, the sparsity-oriented regularizer prevents over-smoothing of the prediction and makes the non-target posterior capture class relationships more meaningful. (2) Prediction consistency in CKD aligns the face and periocular posteriors, maximizing the information transfer from the face to the periocular network. We empirically validate our theoretical claims.

{

We highlight that, unlike \cite{yuan2020revisiting}, we are first to rigorously show that the KD loss is precisely equal to label smoothing with a novel regularizer (Thm.~\ref{thm:ckd_smooth}). \cite{yuan2020revisiting}, on the other hand, does not give rigorous derivation. Moreover, by geometrically analyzing the graph of the regularizer, we derive that the novel regularizer promotes sparsity on the non-target units of logit (Prop.~\ref{thm:reg}, thereby enabling the model to learn inter-class relationship information on the logit. We are the first to show how KD promotes inter-class relationship learning rigorously.
}

Below, we regard the label $\mathbf{y}$ as a one-hot vector or an integer $y \in \{1,\dots,K\}$, depending on the context. 

\subsection*{(1) CKD is equivalent to label smoothing of the face and periocular network predictions with a novel regularizer that helps the network capture relational information of subject identities.}

We first show that CKD can be decomposed into prediction smoothing and regularization.

\begin{thm}
\label{thm:ckd_smooth}
Let $\mathbf{p}_\tau$ denote the softmax probability of the periocular logit $\mathbf{z}$ divided by $\tau$, $p_{\tau,k} = e^{z_k/\tau} / \sum_i e^{z_i/\tau}$, and likewise for face, $\mathbf{p}^F_\tau$. Let $\mathbf{p}=\mathbf{p}_1$ and $\mathbf{p}^F = \mathbf{p}^F_1$ be the softmax posteriors with $\tau=1$. Then, $L_{full}$ is equal to 
\begin{equation}
\label{eq:loss_equiv}
L_{full}
= H(\widetilde{\mathbf{y}}, \mathbf{p}^F) + H(\widetilde{\mathbf{y}}^F, \mathbf{p}) 
+ \frac{\tau}{1 + \tau} \left(R(\mathbf{z}^F) + R(\mathbf{z})\right)
\end{equation} 
up to scale 
where $H(\mathbf{y}, \mathbf{p}) = - \sum_k y_k \log p_k $ is cross entropy, $\widetilde{y}$ is a  smooth label
$
\widetilde{\mathbf{y}} = \frac{\mathbf{y} + \tau \mathbf{p}_\tau}{1 + \tau}
$
with the similarly defined smooth label $\widetilde{\mathbf{y}}^F = \frac{\mathbf{y} + \tau \mathbf{p}^F_\tau}{1 + \tau}$, and 
\begin{equation}
R(\mathbf{z}) = - \log \frac{\sum_i e^{z_i}}{\left(\sum_i e^{z_i/\tau} \right)^\tau}
\end{equation}
is a regularizer with a similar regularizer $R(\mathbf{z}^F) = - \log (\sum_i e^{z^F_i} / (\sum_i e^{z^F_i / \tau})^\tau)$
\end{thm}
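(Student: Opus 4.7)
The plan is to expand $L_{full}=L_{class}+L_{CKD}$, exploit the stop-gradient structure to split the two KL divergences so that one summand depends only on the periocular logit $\mathbf{z}$ and the other only on the face logit $\mathbf{z}^F$, and then verify one algebraic identity per network. The identity rewrites the tempered log-softmax in terms of the unit-temperature log-softmax; the residual that this rewriting produces is exactly $R(\mathbf{z})$, and the leftover scalar factor $1+\tau$ is what the statement's qualifier ``up to scale'' absorbs.

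First I would observe that in $\tau^2 D_{KL}(\mathbf{p}^F_\tau \parallel \mathbf{p}_\tau)$ the first argument $\mathbf{p}^F_\tau$ is stop-gradient, so the entropy piece $\tau^2\sum_k p^F_{\tau,k}\log p^F_{\tau,k}$ is a constant with respect to $\mathbf{z}$; only the cross-entropy piece $-\tau^2\sum_k p^F_{\tau,k}\log p_{\tau,k}$ is a genuine function of $\mathbf{z}$. Symmetrically for $\tau^2 D_{KL}(\mathbf{p}_\tau \parallel \mathbf{p}^F_\tau)$ with roles swapped. Hence $L_{full}$ decouples into a periocular-dependent part and a face-dependent part, plus stop-gradient constants that can be discarded in the stated equality.

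The key algebraic step is the identity $\tau\log p_{\tau,k} = \log p_k - R(\mathbf{z})$, which follows immediately from $\log p_k = z_k - \log\sum_i e^{z_i}$, $\tau\log p_{\tau,k} = z_k - \tau\log\sum_i e^{z_i/\tau}$, and the definition of $R(\mathbf{z})$. Multiplying by $\tau p^F_{\tau,k}$, summing over $k$, and using $\sum_k p^F_{\tau,k}=1$ gives $-\tau^2\sum_k p^F_{\tau,k}\log p_{\tau,k} = -\tau\sum_k p^F_{\tau,k}\log p_k + \tau R(\mathbf{z})$. Adding the classification term $H(\mathbf{y},\mathbf{p}) = -\sum_k y_k\log p_k$ then lets me collect coefficients of $\log p_k$ into $(y_k+\tau p^F_{\tau,k})=(1+\tau)\widetilde{y}^F_k$, producing a periocular-relevant loss of $(1+\tau)H(\widetilde{\mathbf{y}}^F,\mathbf{p})+\tau R(\mathbf{z})$, modulo the stop-gradient constant.

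Running the mirror-image computation for $\mathbf{z}^F$ and summing yields $L_{full} = (1+\tau)\bigl[H(\widetilde{\mathbf{y}}^F,\mathbf{p})+H(\widetilde{\mathbf{y}},\mathbf{p}^F)+\frac{\tau}{1+\tau}(R(\mathbf{z})+R(\mathbf{z}^F))\bigr]+\text{const}$, which is precisely the claim once the global $1+\tau$ is absorbed into ``up to scale.'' I do not anticipate a substantive obstacle; the proof is algebraic book-keeping. The only mildly subtle points are justifying the removal of the two stop-gradient entropy terms as loss-irrelevant constants, and being careful that the factor $R(\mathbf{z})$ can indeed be pulled out of the sum over $k$ because $\sum_k p^F_{\tau,k}=1$ (and analogously $\sum_k p_{\tau,k}=1$ for the face-side identity).
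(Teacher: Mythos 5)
Your proposal is correct and follows essentially the same route as the paper: your key identity $\tau\log p_{\tau,k}=\log p_k - R(\mathbf{z})$ is exactly the paper's Lemma \ref{thm:ckd_smooth_lemma} in pointwise form, and collecting the coefficients of $\log p_k$ into $(1+\tau)\widetilde{y}^F_k$ is the paper's appeal to linearity of $H$ in its first argument. The only difference is cosmetic: you spell out explicitly that the stop-gradient makes the entropy parts of the two KL terms into constants, a step the paper leaves implicit when it passes from $D_{KL}$ to cross-entropies.
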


To prove the above theorem, we utilize a technical lemma.

\begin{lem}
\label{thm:ckd_smooth_lemma}
We have
$ H(\mathbf{p}^F_\tau, \mathbf{p}_\tau) = \frac{1}{\tau} H(\mathbf{p}^F_\tau, \mathbf{p}) + \frac{1}{\tau} R(\mathbf{z})$,
and similarly for $H(\mathbf{p}_\tau, \mathbf{p}^F_\tau)$.
\end{lem}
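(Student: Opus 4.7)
The plan is to prove the identity by direct computation, using the fact that cross entropy against a softmax distribution decomposes into a linear functional of the logits plus a log-partition term, and that the two resulting partition-function discrepancies collapse into $R(\mathbf{z})$. The role of $\mathbf{p}^F_\tau$ here is essentially passive: it is just a probability vector that sums to one, so we never need its explicit form.

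First I would write $\log p_{\tau,k} = z_k/\tau - \log Z_\tau$ and $\log p_k = z_k - \log Z_1$, where $Z_\tau = \sum_i e^{z_i/\tau}$ and $Z_1 = \sum_i e^{z_i}$. Substituting into the definitions of cross entropy and using $\sum_k p^F_{\tau,k} = 1$, I obtain
\begin{align*}
H(\mathbf{p}^F_\tau,\mathbf{p}_\tau) &= -\tfrac{1}{\tau}\sum_k p^F_{\tau,k} z_k + \log Z_\tau,\\
H(\mathbf{p}^F_\tau,\mathbf{p}) &= -\sum_k p^F_{\tau,k} z_k + \log Z_1.
\end{align*}
Dividing the second line by $\tau$ makes the linear terms identical, so subtracting yields
\begin{equation*}
H(\mathbf{p}^F_\tau,\mathbf{p}_\tau) - \tfrac{1}{\tau} H(\mathbf{p}^F_\tau,\mathbf{p}) \;=\; \log Z_\tau - \tfrac{1}{\tau}\log Z_1 \;=\; \tfrac{1}{\tau}\log\tfrac{Z_\tau^{\tau}}{Z_1}.
\end{equation*}

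Next I would match this to $\tfrac{1}{\tau} R(\mathbf{z})$. By definition, $R(\mathbf{z}) = -\log(Z_1/Z_\tau^\tau) = \log(Z_\tau^\tau / Z_1)$, so the right-hand side above is exactly $\tfrac{1}{\tau} R(\mathbf{z})$, completing the first identity. The analogue for $H(\mathbf{p}_\tau,\mathbf{p}^F_\tau)$ follows by symmetry, swapping the roles of $\mathbf{z}$ and $\mathbf{z}^F$ (noting $R(\mathbf{z}^F)$ is defined by the same expression on the face logits).

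The hard part is essentially nonexistent for this lemma; the main thing to be careful of is the temperature bookkeeping, in particular the factor of $\tfrac{1}{\tau}$ that appears in front of the linear logit term from $\log p_{\tau,k}$ but not from $\log p_k$. This asymmetry is precisely what forces the $\tfrac{1}{\tau}$ prefactor on $H(\mathbf{p}^F_\tau,\mathbf{p})$ in the statement, and in turn what allows the log-partition remainder to collect into the specific combination defining $R(\mathbf{z})$.
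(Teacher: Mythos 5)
Your proof is correct and takes essentially the same route as the paper's: both are direct computations that split $\log p_{\tau,k}$ into a $\tfrac{1}{\tau}$-scaled copy of $\log p_k$ plus a $k$-independent log-partition discrepancy, which sums (via $\sum_k p^F_{\tau,k}=1$) to exactly $\tfrac{1}{\tau}R(\mathbf{z})$. The paper phrases this by multiplying and dividing the log-ratio by $\bigl(\sum_i e^{z_i}\bigr)^{1/\tau}$ rather than expanding into logits and $\log Z$ terms, but the underlying algebra is identical.
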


\begin{proof}[Proof of Lemma \ref{thm:ckd_smooth_lemma}]
Let $q_k = e^{z_k}$. Then, 
$ H(\mathbf{p}^F_\tau, \mathbf{p}_\tau) = - \sum_k p_{\tau, k}^F \log \frac{q_k^{1/\tau}}{\sum_i q_i^{1/\tau}}$.
We simplify the log term
\begin{equation}
\log \frac{q_k^{1/\tau}}{\sum_i q_i^{1/\tau}} = \log \frac{q_k^{1/\tau}}{\sum_i q_i^{1/\tau}} \frac{\left(\sum_i q_i\right)^{1/\tau}}{\left(\sum_i q_i\right)^{1/\tau}},
\end{equation}
which is equal to 
$ \frac{1}{\tau} \log \frac{q_k}{\sum_i q_i} + \frac{1}{\tau} \log \frac{\sum_i q_i}{\left(\sum_i q_i^{1/\tau}\right)^\tau}$.
Therefore, substituting the above expression in $H(\mathbf{p}^F_\tau, \mathbf{p}_\tau)$ and noting $q_k = e^{z_k}$, we obtain the desired result.
\end{proof}

\begin{proof}[Proof of Theorem \ref{thm:ckd_smooth}]
Note that
$ H(\mathbf{y},\mathbf{p}) + \tau H(\mathbf{p}^F_\tau, \mathbf{p}) 
= (1+\tau) H \left(
\frac{\mathbf{y} + \tau \mathbf{p}^F_\tau}{1+\tau}, \mathbf{p}
\right) $
by the linearity of $H$ in its first variable,
and thus 
$ H(\mathbf{y},\mathbf{p}) + \tau H(\mathbf{p}^F_\tau, \mathbf{p}) = (1+\tau)H(\widetilde{\mathbf{y}}^F, \mathbf{p})$.
Similarly, $H(\mathbf{y},\mathbf{p}^F) + \tau H(\mathbf{p}_\tau, \mathbf{p}^F) = (1+\tau) H(\widetilde{\mathbf{y}}, \mathbf{p}^F)$.
Therefore, by Lemma \ref{thm:ckd_smooth_lemma}, minimizing the full objective $L_{full}$ of CKD is equivalent to minimizing
\begin{equation}
H(\mathbf{y}, \mathbf{p}^F) + H(\mathbf{y}, \mathbf{p}) + \tau^2 H(\mathbf{p}_\tau, \mathbf{p}^F_\tau) + \tau^2 H(\mathbf{p}^F_\tau, \mathbf{p}_\tau),
\end{equation}
which, after rearrangement, is proportional to
\begin{equation}
H( \widetilde{\mathbf{y}}, \mathbf{p}^F ) + \frac{\tau R(\mathbf{z}^F)}{1+\tau} 
+ H(\widetilde{\mathbf{y}}^F, \mathbf{p}) + \frac{\tau R(\mathbf{z})}{1+\tau},
\end{equation}
deducing the desired equation.
\end{proof}

Theorem \ref{thm:ckd_smooth} states that the full objective of the CKD model can be decomposed into label smoothing with learned labels and regularizers $R(\mathbf{z})$ and $R(\mathbf{z}^F)$. The label smoothing and regularization mechanisms are applied to the face and periocular predictions.
We carefully analyze each of these mechanisms:

\begin{prop}
\label{thm:smooth_label}
When $\tau \to \infty$,
the smooth label $\widetilde{\mathbf{y}}$ converges to the uniform noise $\mathbf{1/K} = (1/K, \dots, 1/K) \in \mathbb{R}^K$;
$
\lim_{\tau \to \infty} \widetilde{\mathbf{y}} = \lim_{\tau \to \infty} \mathbf{p}_\tau = \mathbf{1/K},
$
and the smooth label $\widetilde{\mathbf{y}}$ is asymptotically equal to $\mathbf{p}_\tau$. Similar property holds for $\widetilde{\mathbf{y}}^F$ and $\mathbf{p}^F_\tau$.
\end{prop}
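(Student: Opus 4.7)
The plan is to verify both claims by direct computation, exploiting the algebraic form of $\widetilde{\mathbf{y}} = (\mathbf{y} + \tau \mathbf{p}_\tau)/(1+\tau)$ and the continuity of the softmax map.

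First I would handle the limit of $\mathbf{p}_\tau$. Since $\mathbf{p}_\tau = \softmax(\mathbf{z}/\tau)$ and $\mathbf{z}/\tau \to \mathbf{0}$ as $\tau \to \infty$, continuity of $\softmax$ gives $\mathbf{p}_\tau \to \softmax(\mathbf{0}) = \mathbf{1}/K$. This is the only nontrivial ingredient; once it is in hand, everything else is algebra.

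Next I would compute $\widetilde{\mathbf{y}} - \mathbf{p}_\tau$ to pin down the relationship between the two quantities. A direct expansion gives
\begin{equation}
\widetilde{\mathbf{y}} - \mathbf{p}_\tau
= \frac{\mathbf{y} + \tau \mathbf{p}_\tau}{1+\tau} - \mathbf{p}_\tau
= \frac{\mathbf{y} - \mathbf{p}_\tau}{1+\tau}.
\end{equation}
Since $\mathbf{y}$ is bounded and $\mathbf{p}_\tau$ stays in the probability simplex, the numerator is bounded while the denominator diverges, so $\widetilde{\mathbf{y}} - \mathbf{p}_\tau \to \mathbf{0}$. Combined with the first step, this yields $\widetilde{\mathbf{y}} \to \mathbf{1}/K$, proving the displayed limit.

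For the asymptotic equality $\widetilde{\mathbf{y}} \sim \mathbf{p}_\tau$, I would simply take the componentwise ratio: for each $k$, both $\widetilde{y}_k$ and $p_{\tau,k}$ tend to $1/K > 0$, so $\widetilde{y}_k / p_{\tau,k} \to 1$. Equivalently, using the identity above, $(\widetilde{y}_k - p_{\tau,k})/p_{\tau,k} = (y_k - p_{\tau,k})/((1+\tau)p_{\tau,k}) \to 0$ since $p_{\tau,k} \to 1/K$ is bounded away from zero. The face version ($\widetilde{\mathbf{y}}^F$ and $\mathbf{p}^F_\tau$) is identical with the superscript $F$ carried through.

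There is no real obstacle here; the only point that deserves a line of justification is the continuity of $\softmax$ at the origin, which ensures that the high-temperature limit of the posterior is exactly uniform. The rest is a one-step algebraic identity together with the fact that $\tau/(1+\tau) \to 1$.
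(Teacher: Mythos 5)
Your proposal is correct and follows essentially the same route as the paper: both reduce to the elementary fact that $\mathbf{p}_\tau \to \mathbf{1}/K$ as $\tau \to \infty$ and then handle the rest by direct algebra on $\widetilde{\mathbf{y}} = (\mathbf{y} + \tau\mathbf{p}_\tau)/(1+\tau)$ (the paper splits it into the two summands $\mathbf{y}/(1+\tau)$ and $\tau\mathbf{p}_\tau/(1+\tau)$, while you compute the difference $\widetilde{\mathbf{y}} - \mathbf{p}_\tau = (\mathbf{y}-\mathbf{p}_\tau)/(1+\tau)$, a cosmetic variation). Your explicit justification of the asymptotic equality via the componentwise ratio is slightly more careful than the paper's, which simply asserts it.
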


\begin{proof}
Note that $\mathbf{y}/(1+\tau) \to \mathbf{0}$, and $\tau \mathbf{p}_\tau / (1+\tau) = \mathbf{p}_\tau / (1/\tau + 1) \to \mathbf{1/K}$ as $\mathbf{p}_\tau \to \mathbf{1/K}$ with $\tau \to \infty$.
\end{proof}

\begin{cor}
\label{thm:entropy}
If $H(\widetilde{\mathbf{y}}, \mathbf{p}^F)$ is the minimum, then the entropy of the posterior converges to its maximum;
$
\lim_{\tau \to \infty} H(\mathbf{p}^F) = \log \frac{1}{K}
$
where $H(\mathbf{p}^F) = \sum_k p^F_k, \log p^F_k$.
\end{cor}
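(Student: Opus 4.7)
The plan is to combine Gibbs' inequality with Proposition~\ref{thm:smooth_label} and then pass to the limit $\tau \to \infty$. First I would recall that the cross entropy $H(\widetilde{\mathbf{y}}, \mathbf{p}^F) = -\sum_k \widetilde{y}_k \log p^F_k$, viewed as a function of $\mathbf{p}^F$ on the probability simplex with $\widetilde{\mathbf{y}}$ held fixed, attains its unique global minimum at $\mathbf{p}^F = \widetilde{\mathbf{y}}$ (the standard corollary of Gibbs' inequality, since $H(\widetilde{\mathbf{y}}, \mathbf{p}^F) - H(\widetilde{\mathbf{y}}, \widetilde{\mathbf{y}}) = D_{KL}(\widetilde{\mathbf{y}} \parallel \mathbf{p}^F) \ge 0$). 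The hypothesis that $H(\widetilde{\mathbf{y}}, \mathbf{p}^F)$ has reached its minimum therefore forces the identification $\mathbf{p}^F = \widetilde{\mathbf{y}}$.

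Next I would invoke Proposition~\ref{thm:smooth_label}, which supplies $\lim_{\tau \to \infty} \widetilde{\mathbf{y}} = \mathbf{1}/K$. Chaining this with the identification of the previous step gives $\lim_{\tau \to \infty} \mathbf{p}^F = \mathbf{1}/K$. Finally I would apply the continuity of the map $\mathbf{p} \mapsto \sum_k p_k \log p_k$ on the probability simplex to evaluate at the uniform distribution and obtain $\lim_{\tau \to \infty} H(\mathbf{p}^F) = \sum_k (1/K) \log(1/K) = \log(1/K)$, which is the asserted limit.

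The only delicate point is the precise reading of the phrase \emph{``$H(\widetilde{\mathbf{y}}, \mathbf{p}^F)$ is the minimum''}: because $\widetilde{\mathbf{y}}$ itself depends on the periocular logit $\mathbf{z}$ through $\mathbf{p}_\tau$, the two sides of the Gibbs identification $\mathbf{p}^F = \widetilde{\mathbf{y}}$ are in principle coupled across the face and periocular networks. I would handle this either by interpreting the hypothesis pointwise (treat $\widetilde{\mathbf{y}}$ as given and optimize $\mathbf{p}^F$, which is the exact setting in which Gibbs binds uniquely), or by appealing to sufficient expressivity of the face network so that $\mathbf{p}^F$ can realize $\widetilde{\mathbf{y}}$ at the joint optimum. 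Once this assumption is laid out, Proposition~\ref{thm:smooth_label} together with continuity finishes the proof in a single line.
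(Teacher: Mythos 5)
Your proposal is correct and follows the route the paper evidently intends: the corollary is stated without an explicit proof, as an immediate consequence of Proposition~\ref{thm:smooth_label}, and your chain (Gibbs' inequality forces $\mathbf{p}^F = \widetilde{\mathbf{y}}$ at the minimum, the proposition gives $\widetilde{\mathbf{y}} \to \mathbf{1}/K$, and continuity of $\mathbf{p} \mapsto \sum_k p_k \log p_k$ yields $\log \frac{1}{K}$) is exactly that deduction. Your remark about the coupling of $\widetilde{\mathbf{y}}$ to the periocular logits is a legitimate subtlety the paper glosses over, and your pointwise reading (or the sufficient-expressivity assumption, which the paper itself invokes in Proposition~\ref{thm:align}) resolves it appropriately.
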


Proposition \ref{thm:smooth_label} with its corollary indicates that when the temperature $\tau$ is large, the face network $\mathbf{p}^F$ \textit{learns more from the learned periocular posterior} $\mathbf{p}_T$ than the fixed one-hot label $\mathbf{y}$, and the face network prediction learns from \textit{more smooth labels}. 

With an overly large temperature, the prediction posterior may become over-smooth (Fig.~\ref{fig:ablate_reg}a). Moreover, smoothness alone does not guarantee the posterior captures profound relationship information. The sparsity-oriented regularizer prevents this possible negative impact by imposing sparsity constraint over the prediction posterior:


\begin{prop}
\label{thm:reg}
$R(\mathbf{z}) \geq 0$ achieves a minimum when $\mathbf{p}$ is sparse; namely, it attains the global minimum by
\begin{equation}
\lim_{z_k \to \infty} R(\mathbf{z}) =
\underset{
\mathbf{z} \to (-\infty, \dots, -\infty, z_k, -\infty, \dots, -\infty)
}{
\lim
}
R(\mathbf{z}) = 0
\end{equation}
For any fixed $k$.

On the other hand, in terms of the temperature $\tau$, $\lim_{\tau \to \infty} R(\mathbf{z}) = \infty$, and $R(\mathbf{z})=1$ is a constant when $\tau=1$.

The same properties hold for $\mathbf{z}^F$.
\end{prop}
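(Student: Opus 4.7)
The plan is to view $R(\mathbf{z})$ as the negative log of a ratio that is controlled by the monotonicity of $\ell^p$-norms. Setting $a_i = e^{z_i/\tau} \ge 0$ gives $e^{z_i} = a_i^\tau$, so the quantity inside the log becomes $\sum_i a_i^\tau / \bigl(\sum_i a_i\bigr)^\tau$. Proving $R(\mathbf{z}) \ge 0$ then reduces to the inequality $\sum_i a_i^\tau \le \bigl(\sum_i a_i\bigr)^\tau$ for $\tau \ge 1$, i.e., $\|\mathbf{a}\|_\tau \le \|\mathbf{a}\|_1$. This follows from the nesting of $\ell^p$-norms (equivalently, from convexity of $t \mapsto t^\tau$ on $[0,\infty)$ via Jensen).

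For the sparsity claim, I would use the fact that equality in $\|\mathbf{a}\|_\tau \le \|\mathbf{a}\|_1$ (for $\tau > 1$ and nonnegative entries) holds precisely when at most one $a_i$ is nonzero. Taking either of the two limits given by the statement, $z_k \to \infty$ with the other logits fixed, or $z_j \to -\infty$ for all $j \ne k$, the posterior concentrates on index $k$, the ratio inside the log tends to $1$, and hence $R(\mathbf{z}) \to 0$. This verifies that $0$ is simultaneously the infimum and is attained asymptotically exactly on the fully sparse (one-hot) posterior, matching the sparsity-oriented interpretation.

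For the temperature behavior I would Taylor-expand $e^{z_i/\tau} = 1 + z_i/\tau + O(1/\tau^2)$, giving $\sum_i e^{z_i/\tau} = K + S/\tau + O(1/\tau^2)$ with $S = \sum_i z_i$ and $K$ the number of classes. Then $\tau \log \sum_i e^{z_i/\tau} = \tau \log K + S/K + O(1/\tau)$, which diverges to $+\infty$ whenever $K > 1$. Since the numerator $\sum_i e^{z_i}$ is independent of $\tau$, this forces $R(\mathbf{z}) \to \infty$. The case $\tau = 1$ is immediate by direct substitution: numerator and denominator of the fraction coincide, so $R(\mathbf{z}) = -\log 1$ reduces to a constant independent of $\mathbf{z}$, as claimed.

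The substantive step is recognizing that the substitution $a_i = e^{z_i/\tau}$ recasts the whole expression as an $\ell^p$-norm comparison; once that is in place, nonnegativity and the sparse-minimum claim follow from the standard equality-case analysis and the $\tau \to \infty$ limit from a one-line Taylor estimate. The only assumption used beyond nonnegativity of $a_i$ is $K > 1$, which is automatic in any multi-class setting.
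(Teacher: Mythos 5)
Your argument is correct, and for the two limit computations it follows essentially the same route as the paper: the paper likewise sends $z_k \to \infty$ (or all other coordinates to $-\infty$) and checks that the ratio inside the logarithm tends to $1$, and it obtains $\lim_{\tau\to\infty} R(\mathbf{z}) = \infty$ by noting that $\sum_i e^{z_i/\tau} \to K > 1$ so that $(\sum_i e^{z_i/\tau})^\tau$ blows up. Where you genuinely add something is the nonnegativity step: the paper simply asserts $(\sum_i e^{z_i/\tau})^\tau \geq \sum_i e^{z_i}$ without justification, whereas your substitution $a_i = e^{z_i/\tau}$ identifies this as the norm comparison $\|\mathbf{a}\|_\tau \leq \|\mathbf{a}\|_1$ for $\tau \geq 1$, and the standard equality-case analysis (equality only when at most one $a_i$ is nonzero) then shows that $R = 0$ is approached \emph{only} along sequences whose posterior concentrates on a single index --- a strictly stronger reading of the ``minimum iff sparse'' claim than the one-directional limits the paper verifies. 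Your Taylor expansion for $\tau \to \infty$ is also more quantitative (it exhibits the divergence rate $\tau \log K$) than the paper's qualitative argument. One small point of reconciliation: for $\tau = 1$ the numerator and denominator coincide, so $R(\mathbf{z}) = -\log 1 = 0$; the proposition's assertion that ``$R(\mathbf{z}) = 1$'' is evidently a typo for $\exp(-R(\mathbf{z})) = 1$ (equivalently $R \equiv 0$), and your computation gives the correct constant.
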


\begin{proof}
Note that $(\sum_i e^{z_i/\tau})^\tau \geq \sum_i e^{z_i}$, showing that $R(\mathbf{z}) \geq 0$. 
On the other hand, note that minimizing $R(\mathbf{z})$ is equivalent to maximizing $\sum_k e^{z_k} / (\sum_i e^{z_i/\tau})^\tau$.
Considering its summand $e^{z_l} / (\sum_i e^{z_i/\tau})^\tau$, 
$
e^{z_l} / (\sum_i e^{z_i/\tau})^\tau \to 0
$
for $l \neq k$ and
$
e^{z_k} / (\sum_i e^{z_i/\tau})^\tau \to 1
$
if $z_k \to \infty$ with fixed $z_l$ ($l \neq k$). If $z_l \to -\infty$ with $z_k$ fixed, then 
$
\sum_k e^{z_k} / (\sum_i e^{z_i/\tau})^\tau \to e^{z_k} / (e^{z_k/\tau})^\tau = 1.
$

On the other hand, $e^{z_k/\tau} \to 1$ as $\tau \to \infty$. Hence, with $K \geq 2$, $\sum_{i=1}^{K} e^{z_i/\tau} >1$ for sufficiently large $\tau$. Thus, $\exp(-R(\mathbf{z}))= \sum_k e^{z_k} / (\sum_i e^{z_i/\tau})^\tau \to 0$, which implies $R(\mathbf{z}) \to \infty$.
\end{proof}

\begin{figure}
\centering
\includegraphics[width=0.9\linewidth]{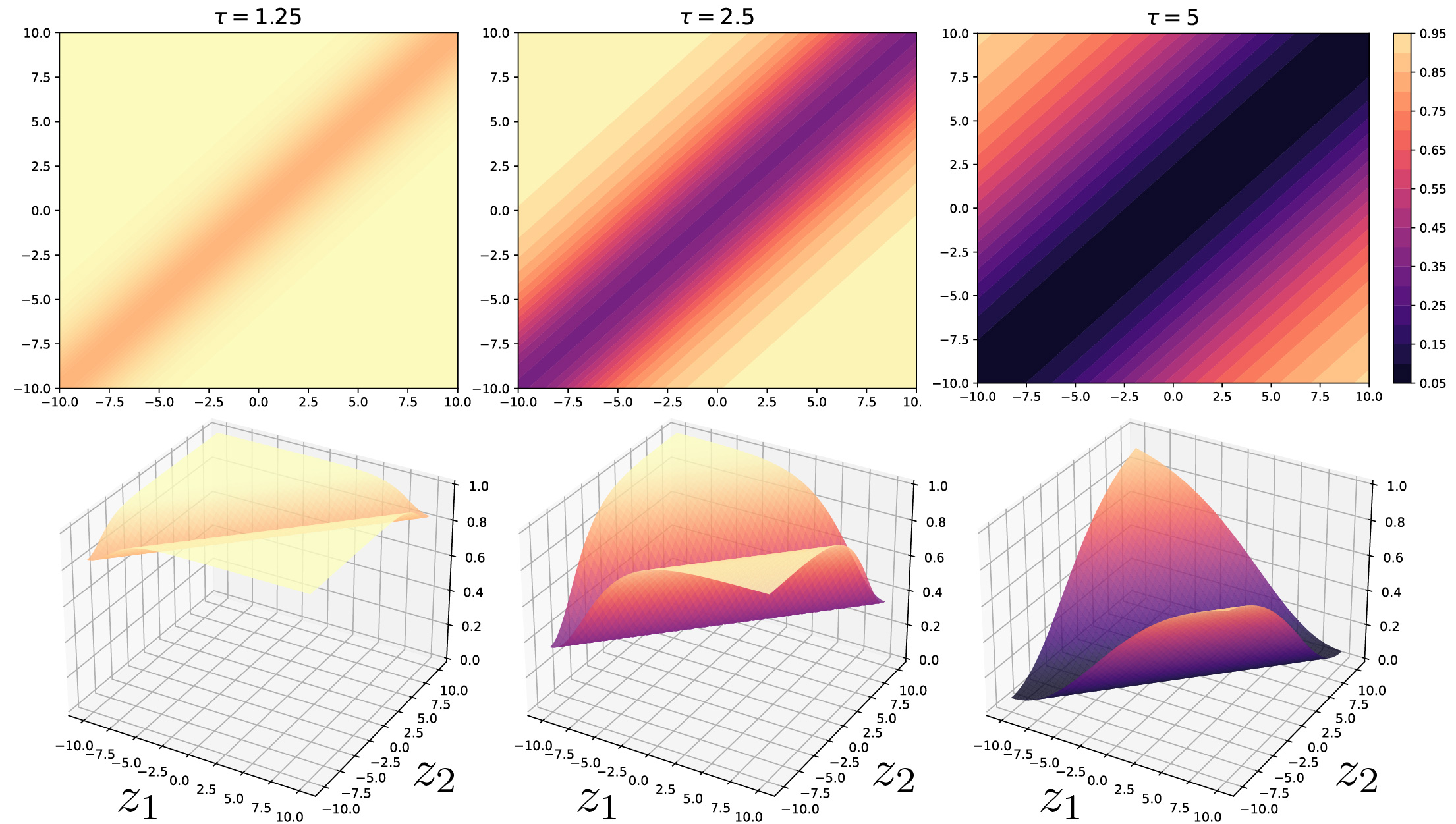}
\caption{
The visualization of of the regularizer $R(\mathbf{z})$ by observing the exponent of its negative $\sum_{k=1}^K e^{z_k} / (\sum_{k=1}^K e^{z_k/\tau})^\tau = \exp(-R(\mathbf{z}))$ with $K {=} 2$ and $\tau =$ 1.25, 2.5, and 5.
(First row) Its 2-D heatmap visualization (Second row) is the corresponding 3-D visualization. The plots show that the regularizer is minimized when either of $z_k$ is maximized, and the other $z_i$ is minimized; namely, $R(\mathbf{z})$ is minimized when the softmax $\mathbf{p}$ (posterior) of the logit $\mathbf{z}$ converges to one-hot. Hence, $R(\mathbf{z})$ \textit{as a regularizer prevents over-smoothing of the predictions} $\mathbf{p}$. In terms of the temperature $\tau$, on the other hand, increasing $\tau$ decreases the exponent, thereby increasing the upper bound of $R(\mathbf{z})$. Thus, a large temperature $\tau$ enhances the impact of regularizer $R(\mathbf{z})$. The theoretical observation of $R(\mathbf{z})$ is given in Proposition \ref{thm:reg}.
}
\label{fig:regularizer}
\end{figure}

\begin{figure}
\centering
\includegraphics[width=0.75\linewidth]{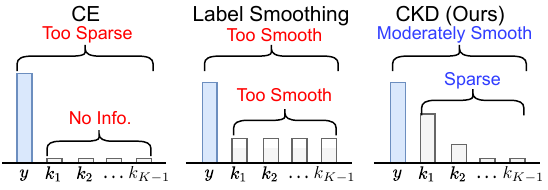}
\caption{
The softmax outputs of different models.
By cross entropy (CE), the face network learns sparse predictions unless it underfits. Label smoothing alone may over-smooth the prediction and not necessarily capture the inter-class relationships. Our CKD, however, smoothes the prediction only to a moderate degree and captures an inter-class relationship in the non-target posterior $p(k \neq y | \mathbf{x}^F)$.
}
\label{fig:effect_reg}
\end{figure}

\noindent
\textbf{The role of the regularizer.}
As depicted in Fig.~\ref{fig:regularizer}, the regularizer $R(\mathbf{z}^F)$ seeks the sparse prediction logit $\mathbf{z}^F$ of the face network by increasing the $k$-th unit of logit and corresponding posterior for \textit{general class index} $k$. Here, the posterior $p(k | \mathbf{x}^F)$ denotes the $k$-th unit of softmax output of $\mathbf{z}^F$. When the indices $k \neq y$ are non-target ones, it can increase the sparsity of non-target posterior $p(k \neq y | \mathbf{x}^F)$, storing meaningful inter-class relationship of face images in the non-target posterior (Fig.~\ref{fig:effect_reg}). Moreover, inclination towards the sparse logit by the regularizer prevents over-smoothing of the prediction caused by $H(\widetilde{\mathbf{y}}, \mathbf{p}^F)$. Experiments in Sec.~\ref{sec:exp_anal_reg} thoroughly verify the theoretical observations.


Minimizing the full objective loss of CKD enables the face network prediction to capture a profound inter-class relationship within the moderate degree of posterior entropy.

\noindent
\textbf{The effect of temperature $\tau$.} 
Based on the above theoretical properties, increasing the temperature of the logit increases the impacts of both label smoothing and the sparsity-oriented regularizer. Since their roles are opposite, a too-large temperature escalates their conflict. Therefore, the selection of a moderately large temperature is crucial. We found that $\tau{=}2.5$ works the best in our experiments.

\subsection*{(2) Minimization of the CKD loss induces precise alignment between face and periocular network posteriors}
\label{sec:theory_transfer}

The CKD loss consists of bi-directional KL divergences. Minimization of these divergences attains the minimum, although the two networks, the face, and periocular networks, learn from different datasets where one has limited information:

\begin{prop}
\label{thm:align}
Assume the face and periocular networks have sufficient model complexities. Then $L_{CKD}$ can be minimized, in which case $\mathbf{p} = \mathbf{p}^F$.
\end{prop}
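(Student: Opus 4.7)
The plan is to exploit the non-negativity of the Kullback-Leibler divergence together with the assumed universal approximation capability of the two networks. Concretely, write
\begin{equation}
L_{CKD} = \tau^2 \bigl[ D_{KL}(\mathbf{p}^F_i \parallel \mathbf{p}_i) + D_{KL}(\mathbf{p}_i \parallel \mathbf{p}^F_i) \bigr].
\end{equation}
By Gibbs' inequality, each summand is non-negative, and each vanishes if and only if $\mathbf{p}^F_i = \mathbf{p}_i$ almost everywhere. Hence $L_{CKD} \geq 0$, with equality if and only if $\mathbf{p}_i = \mathbf{p}^F_i$ for every training index $i$. This already fixes the shape of any minimizer and gives the ``$\mathbf{p} = \mathbf{p}^F$'' conclusion; the remaining task is to argue that this zero lower bound is actually attainable.

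For attainability, I would invoke the sufficient-complexity hypothesis on the two branches to construct an explicit configuration that drives $L_{CKD}$ to $0$. Because the projection heads $g$ and $g_F$ are independent modules applied after the shared feature extractor $f$ (as defined in Eq.~\eqref{eq:shared}), even with shared weights and shared batch statistics the logits $\mathbf{z}_i = g(f(\mathbf{x}_i))$ and $\mathbf{z}^F_i = g_F(f(\mathbf{x}^F_i))$ are produced by distinct, independently parametrized maps from the two inputs. By a universal approximation argument, one can choose parameters so that $\mathbf{z}_i$ and $\mathbf{z}^F_i$ coincide for every $i$ in the training set; for instance, both heads can be taken to output the same vector (e.g., the logit corresponding to identity $y_i$, or even the zero vector, which produces a uniform posterior). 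The resulting softmax posteriors then satisfy $\mathbf{p}_i = \mathbf{p}^F_i$, both KL terms are zero, and $L_{CKD}$ is minimized.

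Combining the two steps yields the proposition: $L_{CKD}$ has minimum $0$, achievable under the complexity assumption, and at any minimizer $\mathbf{p}_i = \mathbf{p}^F_i$. The only subtle point to address is the stop-gradient operation on the first argument of each KL divergence, but since stop-gradient only redirects gradient flow and does not change the functional value of the loss, it has no effect on the location or value of the infimum; the same minimizers of the unmodified objective remain minimizers. I do not anticipate a substantive obstacle; the argument is essentially a one-step application of Gibbs' inequality plus a constructive universal-approximation realization, so the main care is just to make explicit that the separate projection heads give the two branches enough freedom to match their outputs despite sharing the backbone.
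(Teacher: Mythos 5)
Your proposal is correct and follows essentially the same route as the paper: non-negativity of the KL divergence (with equality iff the two posteriors coincide) plus the sufficient-complexity assumption to show the zero lower bound is attainable. The only cosmetic difference is that the paper phrases attainability dynamically (the face network can be updated until $D_{KL}(\mathbf{p} \parallel \mathbf{p}^F) \to 0$, after which both terms vanish), whereas you give a static constructive minimizer via the independent projection heads; both are valid, and your explicit treatment of the stop-gradient and of the equality case of Gibbs' inequality is if anything slightly more careful than the paper's.
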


\begin{proof}
Since the face network has sufficient model complexity, there is an update rule that $D_{KL}(\mathbf{p} \parallel \mathbf{p}^F) \to 0$. If converged, then $\mathbf{p}^F = \mathbf{p}$, and  $D_{KL}(\mathbf{p}^F \parallel \mathbf{p}) = 0$, implying $L_{CKD}=0$.
\end{proof}

Hence, minimizing the CKD loss induces precise alignment between the face and periocular posteriors.



\noindent
\textbf{Overall Implication of CKD.}
Due to the hidden sparsity-oriented regularizer in CKD (Theorem \ref{thm:ckd_smooth}), the face network extracts profound inter-class relationship information from whole identities in its prediction layer. Then, by the accurate alignment between the face and periocular posteriors, the inter-class relationship extracted from face images is effectively transferred to the prediction layer of the periocular network.


\noindent
\textbf{Differences to Standard KD in Theoretical Aspects.}
The vanilla KD starkly contrasts our proposed CKD in the above theoretical aspects. The standard KD transfers knowledge from the face to the periocular network without consistency. Due to the lack of consistency, the following can be observed: (1') Only the periocular network learns from smooth labels. Thus, the face network may overfit and does not extract relationship information from face images (2'). Due to the absence of bi-directional KL terms, the distillation loss can be suboptimal (Remark in Sec.~\ref{sec:theory_transfer}), so the knowledge transfer is also ineffective.

\section{Evaluation and Analysis}
\label{sec:exp}

\begin{table*}
\centering
\caption{Rank-1 identification rates and verification EER for six benchmarking datasets and their averages for the ablation study. Best performances are highlighted in bold font.
}
\resizebox{0.55\linewidth}{!}{   
\begin{tabular}{c l c c c c c c c c}
\hline
~ & ~ & Eth & PF & FS & IW & AR & YTF & Avg & Gain \\ 
\hline
\multirow{7}{*}{\rotatebox[origin=c]{90}{Identification}} & Face CE & 97.48 & 99.19 & 98.66 & 90.18 & 91.74 & 78.52 & 92.63 & 100 \\ 
~ & CE & 92.82 & 95.78 & 96.44 & 77.63 & 93.5 & 56.78 & 85.49 & 0 \\ 
~ & $L_{class}$+SW+SBS & 93.21 & 96.14 & 96.18 & 78.63 & 94.59 & 57.57 & 86.05 & 8 \\ 
~ & $L_{class}{+}L_{F2P}$+SW+SBS & 93.09 & 95.92 & 96.48 & 77.81 & 94.39 & 59.52 & 86.2 & 10 \\ 
~ & $L_{class}{+}L_{CKD}$ & 95.67 & 97.43 & 97.07 & 83.17 & \textbf{96.46} & \textbf{63.42} & 88.87 & 47 \\ 
~ & $L_{class}{+}L_{CKD}$+SW & 95.5 & 97.06 & 97.2 & 82.92 & 95.63 & 62.85 & 88.53 & 43 \\ 
~ & CKD & \textbf{95.75} & \textbf{97.45} & \textbf{97.32} & \textbf{83.93} & 96.11 & 63.18 & \textbf{88.96} & \textbf{49} \\ 
\hline
\multirow{7}{*}{\rotatebox[origin=c]{90}{Verification}} & Face CE & 4.06 & 3.5 & 1.58 & 4.79 & 4.23 & 11.84 & 5 & 100 \\ 
~ & CE & 7.33 & 7.26 & 3.8 & 8.37 & 12.23 & 18.34 & 9.55 & 0 \\ 
~ & $L_{class}$+SW+SBS & 7.21 & 7.47 & 4.12 & 8.4 & 8.62 & 18 & 8.97 & 13 \\ 
~ & $L_{class}{+}L_{F2P}$+SW+SBS & 7.06 & 7.12 & 3.63 & 8.11 & 8.61 & 16.89 & 8.57 & 22 \\ 
~ & $L_{class}{+}L_{CKD}$ & 6.19 & 6.02 & 3.25 & 6.81 & 7.5 & 16.23 & 7.67 & 41 \\ 
~ & $L_{class}{+}L_{CKD}$+SW & 6.03 & 6.03 & \textbf{3.02} & 6.76 & 7.5 & 15.73 & 7.51 & 45 \\ 
~ & CKD & \textbf{5.61} & \textbf{5.48} & 3.13 & \textbf{6.53} & \textbf{6.77} & \textbf{15.16} & \textbf{7.11} & \textbf{54} \\ 
\hline
\end{tabular}
}
\label{table:model_ablation}
\end{table*}

Experiments are performed to evaluate the learned embedding of CKD for periocular identification or verification. This section describes the benchmarking datasets, the empirical settings and configurations, and the ablation study, followed by a thorough comparison and empirical analysis.

\subsection{Dataset Description}
\label{sec:exp_data}
Our experiments employ a summation of six datasets, including Ethnic \cite{tiong2019periocular}, PubFig \cite{kumar2009attribute}, FaceScrub \cite{ng2014data}, Imdb-Wiki \cite{IMDB}, AR \cite{martinez1998ar}, and YTF \cite{wolf2011face}. For training and validation purposes, a random subset is sampled from VGGFace \cite{parkhi2015deep} and Ethnic. The unseen testing set (composed of the gallery and the probe images) is configured in the open-world setting for a generally more challenging problem, where some identities are non-existent in the training set. 

\noindent
\textbf{Training Dataset} consists of 1054 subjects (577 subjects from the VGG face dataset and 477 subjects from the Ethnic dataset \cite{tiong2019periocular}) with 238,919 images as a whole for both face and periocular images. We crop the periocular region from each face image to yield the periocular images. Then, we randomly split all images into 166,737 for training, 47,372 for validation, and the rest for testing data.

\noindent
\textbf{Ethnic Dataset \cite{tiong2019periocular}} contains five ethnic groups: Asian, African, White, Middle Eastern, and Latin American. The subjects are sportsmen, celebrities, and politicians, and the images are acquired in unconstrained environments. There are 329 subjects in total (disjoint with the 477 subjects in the training dataset), 1,645 gallery images, five images for each subject, and 24,171 probe images.

\noindent
\textbf{PubFig Dataset \cite{kumar2009attribute}} comprises 200 subjects from real-life images collected from the internet. Images are acquired in the wild without any other settings. In our work, there are three probe sets. The gallery set has 9,221 images, and the probe sets have 6,138, 6,101, and 7,680 images, respectively.

\noindent
\textbf{FaceScrub Dataset \cite{ng2014data}} consists of 530 subjects where images are acquired in unconstrained environments. Images were acquired with various poses, illuminations, facial expressions, and backgrounds. This dataset has two probe sets. The gallery set has 31,066 images, the first probe set has 21,518 images, and the second has 27,292 images.

\noindent
\textbf{Imdb-Wiki Dataset \cite{IMDB}} collects 2,129 subjects with images taken in the wild environment and has three probe datasets. The gallery set has 40,241 images; the first probe has 17,658 images, the second one has 15,252 images, and the third one has 16,273 images in total.

\noindent
\textbf{AR Dataset \cite{martinez1998ar}} consists of faces with varying illumination, expression, blur, and occlusion conditions. The setup of the camera's focal distance and illumination conditions are controlled, but several distracting factors are added for each probe set. This dataset comprises 100 subjects, and periocular images are acquired by cropping the face images. The gallery set contains seven images per subject resulting in 700 images. The first probe set has blurred images with four extents of Gaussian filters applied to 2,800 images. The second one is conducted with various expressions, illumination, and blurring resulting in 1,400 images. The third probe set has zero-outed square occlusion of various sizes resulting in 3500 images. The last probe set occludes with a scarf resulting in 600 images. A detailed description of the AR dataset can be found in \cite{martinez1998ar}.  

\noindent
\textbf{YouTube Face (YTF) Dataset \cite{wolf2011face}} is an online YouTube video repository for 1,595 identities capturing a wide range of awful visual variations, including low-resolution and motion-blurred footage. The YTF evaluation protocol is originally set up for face verification. For periocular identification, we single out 225 subjects with at least four videos per subject in our experiments, where three videos are designated for the gallery setting and the remaining for the probe set. This results in 150,259 gallery images and 36,995 probe images.

\subsection{Experiment Settings}
\label{sec:exp_set}
We implement CKD with PyTorch \cite{paszke2019pytorch} trained on a single Nvidia GeForce RTX2080Ti GPU. The input dimensions for face and periocular images are 128×128 pixels and 48×128 pixels, respectively. The CKD backbone is ResNet-18 \cite{he2016deep} built with shared weights (SW) and batch-norm statistics (SBS), followed by projection heads for the face and periocular. The SW layers comprise three residual groups (indicated by Res1 to Res3), each containing two squeeze-and-excitation residual blocks \cite{hu2018squeeze}. On the other hand, each projection head is interleaved by a single SE residual block (Res4), an embedding layer, and a prediction layer used for supervised learning.

The entire training stage is optimized with stochastic gradient descent for 90 epochs in total, where the initial learning rate of 0.1 is decayed by a factor of 0.1 at epochs 30, 60, and 80. Other optimization hyperparameters include 0.9 for momentum and 5.0×10-4 for weight decay, and the temperature $\tau$ is set to 2.5. For weight initialization, we follow the default setting in PyTorch.

Other baseline models that involve single-stage training are trained with the same optimization setting. For two-stage baseline models such as KD, the face network is trained with 50 epochs and is distilled to train the periocular network with 40 epochs. For the two-stage models, the learning rate of the face network decays by a factor of 0.1 at epochs 15, 30, and 40, while the learning rate of the periocular network decays at epochs 10, 20, and 30 with the same decay rate.

For the identification task, we report the average cumulative match curve (CMC) \cite{hu2018squeeze} for rank-1 up to rank-10 identification rate (\%). CMC generally illustrates the retrieval rate of probe samples from the gallery images. 
In the case of multiple probe sets, the average result is reported by varying the choice of gallery set. For example, in the PubFig dataset consisting of 4 folds (one gallery set and three probe sets), we consider all combinations of gallery-probe pairs and average the 12 results.

We randomly select four samples per subject from the gallery set for verification. Each sample compares with every other one - resulting in 4x3x(number of identities)/2 positive cases and 4 x 4 x (number of identities) x (number of identities – 1)/2 negative cases in total. For example, in the case of PubFig, there are 200 subjects to extract 800 images. There would be 1,200 positive cases and 318,400 negative cases in total. We show the Equal Error Rate (EER) and the interpolated average ROC curve over all six datasets.

\begin{table*}
\centering
\caption{
{
Rank-1 identification rates and verification EER for six benchmarking datasets and their averages of various periocular recognition methods. Best performances are highlighted in bold font.
}
}
\resizebox{0.5\linewidth}{!}{   
\begin{tabular}{c l  c c c c c c c }
\hline
~ & ~ & Eth & PF & FS & IW & AR & YTF & Avg \\ 
\hline
\multirow{6}{*}{\rotatebox[origin=c]{90}{Identification}} & Attnet \cite{zhao2018improving} & 87.29 & 90.93 & 91.07 & 65.35 & 82.38 & 45.78 & 77.13 \\ 
~ & OC-LBCP \cite{tiong2019periocular} & 92.43 & 95.48 & 95.64 & 76.21 & 94.92 & 56.51 & 85.2 \\ 
~ & NIRPB \cite{hwang2020near} & 83.7 & 87.95 & 87.8 & 58.24 & 78.25 & 43 & 73.16 \\ 
~ & L2SR \cite{jung2020l2sr} & 94.02 & 96.42 & 96.76 & 80.34 & 94.3 & 59.2 & 86.84 \\ 
~ & GLSR \cite{jung2020glsr} & 94.24 & 96.62 & 96.96 & 80.82 & 94.67 & 59.77 & 87.18 \\ 
~ & CMBNet \cite{ng2022conditional} & 92.86 & 96.91 & 96.42 & 82.74 & 93.62 & - & - \\
~ & ASTNet \cite{borza2023adaptive} & 87.27  & 90.64  & 91.31  & 62.75  & 90.89  & 50.21  & 78.85 \\
~ & CKD & \textbf{95.75} & \textbf{97.45} & \textbf{97.32} & \textbf{83.93} & \textbf{96.11} & \textbf{63.18} & \textbf{88.96} \\ 
\midrule
\multirow{6}{*}{\rotatebox[origin=c]{90}{Verification}} & Attnet \cite{zhao2018improving} & 8.93 & 8.61 & 5.26 & 9.51 & 14.85 & 19.12 & 11.05 \\ 
~ & OC-LBCP \cite{tiong2019periocular} & 7.98 & 8.38 & 4.83 & 9.59 & 9.6 & 19.22 & 9.93 \\ 
~ & NIRPB \cite{hwang2020near} & 9.69 & 9.25 & 5.5 & 10.22 & 15.28 & 19.67 & 11.6 \\ 
~ & L2SR \cite{jung2020l2sr} & 6.76 & 6.32 & 3.54 & 7.57 & 9.98 & 17.51 & 8.61 \\ 
~ & GLSR \cite{jung2020glsr} & 6.54 & 6.39 & 3.2 & 7.34 & 9.24 & 18.06 & 8.46 \\ 
~ & CMBNet \cite{ng2022conditional} & \textbf{4.56} & 6.78 & 3.98 & 7.64 & 8.27 & - & - \\
~ & ASTNet \cite{borza2023adaptive} & 10.60  & 12.38  & 6.98  & 13.21  & 12.73  & 19.33  & 12.54 \\
~ & CKD & 5.61 & \textbf{5.48} & \textbf{3.13} & \textbf{6.53} & \textbf{6.77} & \textbf{15.16} & \textbf{7.11} \\ 
\hline
\end{tabular}
}
\label{table:compare_periocular_sota}
\end{table*}

\subsection{Ablation Study}
\label{sec:exp_ablation}
The CKD model integrates two key components: the CKD loss transfers knowledge at the prediction layer and consistency constraint at the feature layers. The CKD loss encapsulates the face-to-periocular KL loss and periocular-to-face KL loss. The knowledge transfer is effected via shared weights and shared batch statistics. To analyze the contribution, we consider the following configurations:

``\textbf{Face CE}'' is trained on face images with the standard CE loss and serves as an upper bound.
``\textbf{CE}'' is trained on periocular images with the standard CE loss and serves as a lower bound.
``\textbf{$L_{class}$+SW+SBS'}' excludes the CKD loss and trains only with the classification loss for face and periocular images.
``\textbf{$L_{class} {+} L_{F2P}$+SW+SBS}'' replaces the bi-directional KL divergence in CKD with a one-way distillation loss (without reverse KL). Here, $L_{F2P}{=}D_{KL}(\mathbf{p}^F \parallel \mathbf{p})$ is the face-to-periocular distillation loss.
``$L_{class}{+}L_{CKD}$'' is the CKD model without knowledge distillation through the feature layers.
``$L_{class}{+}L_{CKD}$+SW'' is the CKD model without sharing batch norm statistics.
All models here are trained with 90 epochs.


To quantify the contribution to the performance improvement of periocular recognition, we measure a relative `Gain' from the average performance $avg_P$ of the periocular CE baseline,
$Gain = \frac{avg_C-avg_P}{avg_F-avg_P} \times 100 \% $,
with its upper bounded set by the average performance $avg_F$ of the Face CE network. Here, $avg_C$ is the average performance of the configuration model.

As summarized in Table \ref{table:model_ablation}, the proposed CKD model improves the average rank-1 identification rate by 3.47\% and reduces the average EER by 2.44\% across all benchmark datasets by 3.47\%. The relative gains are 49\% for identification and 54\% for verification, which is significantly greater than the vanilla KD.

The result of ``$L_{class}$+SW+SBS'' in Table \ref{table:model_ablation} shows that CKD loss is necessary for periocular recognition enhancement. Moreover, in the single-stage KD regiment of CKD, minimization of reverse KL divergence is crucial, as shown by the result of $L_{class}{+}L_{F2P}$ one-way knowledge transfer from face to periocular reduces the relative gains by 39\% in the identification rate and 33\% in EER. This shows that practicing consistency in knowledge distillation demands periocular recognition enhancement.

The results of ``$L_{class}+L_{CKD}$'' and ``$L_{class}+L_{CKD}$+SW'' in Table \ref{table:model_ablation} exhibit the importance of CKD via the network feature layers by SW and SBS. Removing any knowledge transfer through the feature layers (i.e., removing both SW and SBS) reduces the average EER by 13\%, indicating that CKD at feature layers makes the periocular network more robust. Moreover, excluding the SBS in the feature layers deteriorates the relative gains by 7\% in the identification rate and 9\% in EER, validating the significant contribution of SBS.

Overall, imposing consistency in the knowledge distillation through prediction at feature layers induces relative gains by 31\% in the average identification rate and by 27\% in the average EER compared to the standard KD.

\begin{table*}[t]
\caption{
{
Rank-1 identification rates and verification EER for six benchmarking datasets and their averages of various KD models. Both periocular to periocular KD and face to periocular KD models are compared. The highest accuracy is highlighted in bold font.
}
}
\centering
\resizebox{0.9\linewidth}{!}{   
\begin{tabular}{c l|c c c c c c c c |c c c c c c c c}
\toprule
& \multirow{2}{*}{Method} & \multicolumn{8}{c|}{Identification} & \multicolumn{8}{c}{Verification} \\ 
\cmidrule{3-18}
& & Eth & PF & FS & IW & AR & YTF & Avg & Gain & Eth & PF & FS & IW & AR & YTF & Avg. & Gain \\ 
\cmidrule{1-18}
& Face CE (Res18) & 97.48 & 99.19 & 98.66 & 90.18 & 91.74 & 78.52 & 92.63 & 100 & 4.06 & 3.5 & 1.58 & 4.79 & 4.23 & 11.84 & 5 & 100 \\ 
& Periocular CE & 92.82 & 95.78 & 96.44 & 77.63 & 93.5 & 56.78 & 85.49 & 0 & 7.33 & 7.26 & 3.8 & 8.37 & 12.23 & 18.34 & 9.55 & 0 \\ 
\cmidrule{1-18}
\multirow{8}{*}{\rotatebox[origin=c]{90}{Periocular-Periocular Res18}}
& AT \cite{zagoruyko2016paying} & 93.68 & 96.24 & 96.41 & 79.03 & 94.24 & 58.87 & 86.41 & 13 & 7.36 & 7.18 & 3.94 & 7.99 & 11 & 17.22 & 9.12 & 10 \\
& KD \cite{hinton2015distilling} & 94.12 & 96.55 & 96.94 & 80.5 & 94.38 & 57.97 & 86.74 & 18 & 6.84 & 6.32 & 3.18 & 7.47 & 9.11 & 17.01 & 8.32 & 27 \\ 
& PKT \cite{passalis2018learning} & 94.11 & 96.49 & 96.76 & 80.36 & 94.42 & 59.13 & 86.88 & 19 & 6.88 & 7 & 3.33 & 7.57 & 10.24 & 17.5 & 8.75 & 18 \\ 
& SPKD \cite{tung2019similarity} & 94.21 & 96.42 & 96.79 & 80.53 & 94.85 & 58.9 & 86.95 & 20 & 7.11 & 6.58 & 3.4 & 7.79 & 10.63 & 17.6 & 8.85 & 15 \\
& RKD \cite{park2019relational} & 94.45 & 96.74 & 96.79 & 81.05 & 95.29 & 59.1 & 87.23 & 24 & 6.54 & 6.18 & 3.37 & 7.55 & 10.39 & 17.33 & 8.56 & 22 \\ 
& DDGSD \cite{xu2019data} & 95.15 & 97 & 96.68 & 82.1 & 94.58 & 60.49 & 87.67 & 30 & 6.54 & 6.45 & 3.52 & 6.92 & 8.9 & 17.77 & 8.35 & 27 \\ 
& BYOT \cite{zhang2019your} & 93.3 & 96.02 & 96.64 & 78.69 & 93.41 & 57 & 85.84 & 5 & 7.22 & 7.5 & 3.3 & 8.25 & 10.38 & 17.65 & 9.05 & 11 \\ 
& CWSD \cite{yun2020regularizing} & 92.78 & 95.89 & 97 & 78.56 & 93.1 & 56.55 & 85.65 & 2 & 7.79 & 6.74 & 2.9 & 7.78 & 9.73 & 18.77 & 8.95 & 13 \\ 
\cmidrule{1-18}

\multirow{6}{*}{\rotatebox[origin=c]{90}{Face-Periocular Res18}}
& AT \cite{zagoruyko2016paying} & 93.65 & 96.24 & 96.21 & 78.67 & 94.75 & 58.88 & 86.4 & 13 & 7.1 & 7.52 & 3.99 & 8.3 & 9.73 & 18 & 9.11 & 10 \\ 
& KD \cite{hinton2015distilling} & 93.92 & 96.4 & 96.69 & 79.47 & 95.13 & 59.07 & 86.78 & 18 & 6.76 & 6.95 & 3.75 & 8.18 & 9.13 & 18.16 & 8.82 & 16 \\ 
& PKT \cite{passalis2018learning} & 92.45 & 95.13 & 95.3 & 74.6 & 93.74 & 57.51 & 84.79 & -10 & 8.24 & 8.25 & 4.46 & 8.87 & 9.12 & 17.72 & 9.44 & 2 \\ 
& SPKD \cite{tung2019similarity} & 93.47 & 96.17 & 96.08 & 78.68 & 94.44 & 57.53 & 86.06 & 8 & 7.41 & 7.89 & 4.17 & 8.37 & 9.47 & 17.71 & 9.17 & 8 \\
& RKD \cite{park2019relational} & 93.25 & 96.10 & 96.23 & 78.56 & 94.45 & 58.09 & 86.11 & 9 & 7.03 & 7.56 & 3.92 & 8.45 & 8.25 & 17.62 & 8.80 & 16 \\ 
& ML \cite{zhang2018deep} & 94.73 & 96.83 & 96.90 & 81.84 & 95.52 & 60.95 & 87.80 & 32 & 7.03 & 6.19 & 3.54 & 7.30 & 8.98 & 16.77 & 8.3 & 27 \\

\cmidrule{1-18}
\multirow{5}{*}{\rotatebox[origin=c]{90}{Face-Periocular Res50}}
& Face CE (Res50) & 97.91  & 99.36  & 98.79  & 91.53  & 92.78  & 80.55  & 93.49 & - & 4.00  & 3.00  & 1.49  & 4.50  & 4.50  & 11.94  & 4.91 & -  \\ 
\cmidrule{2-18}
& AT \cite{zagoruyko2016paying} & 93.73  & 96.14  & 96.29  & 79.06  & 94.85  & 59.63  & 86.62 & -  & 7.03  & 7.39  & 4.10  & 8.33  & 8.62  & 17.11  & 8.76 & - \\ 
& KD \cite{hinton2015distilling} & 93.71 & 96.29 & 96.61 & 79.21 & 94.70 & 58.69 & 86.54 & - & 7.07 & 6.76 & 3.44 &  7.99 & 9.14 & 17.93 & 8.72 & -  \\ 
& PKT \cite{passalis2018learning} & 92.09  & 95.02  & 95.60  & 74.35  & 93.38  & 57.75  & 84.70 & -  & 8.20  & 8.06  & 4.27  & 9.22  & 9.21  & 17.23  & 9.37 & -  \\ 
& SPKD \cite{tung2019similarity} &93.58  & 96.13  & 96.17  & 78.59  & 94.34  & 57.97  & 86.13 & -   & 6.69  & 7.44  & 4.13  & 8.34  & 10.78  & 17.48  & 9.14 & -  \\

\cmidrule{1-18}

& CKD (ours)& \textbf{95.75} & \textbf{97.45} & \textbf{97.32} & \textbf{83.93} & \textbf{96.11} & \textbf{63.18} & \textbf{88.96} & \textbf{49} & \textbf{5.61} & \textbf{5.48} & \textbf{3.13} & \textbf{6.53} & \textbf{6.77} & \textbf{15.16} & \textbf{7.11} & \textbf{54} \\ 
\bottomrule
\end{tabular}
}
\label{table:compare_kd}
\end{table*}

\begin{figure*}[t]
\centering
\includegraphics[width=0.99\linewidth]{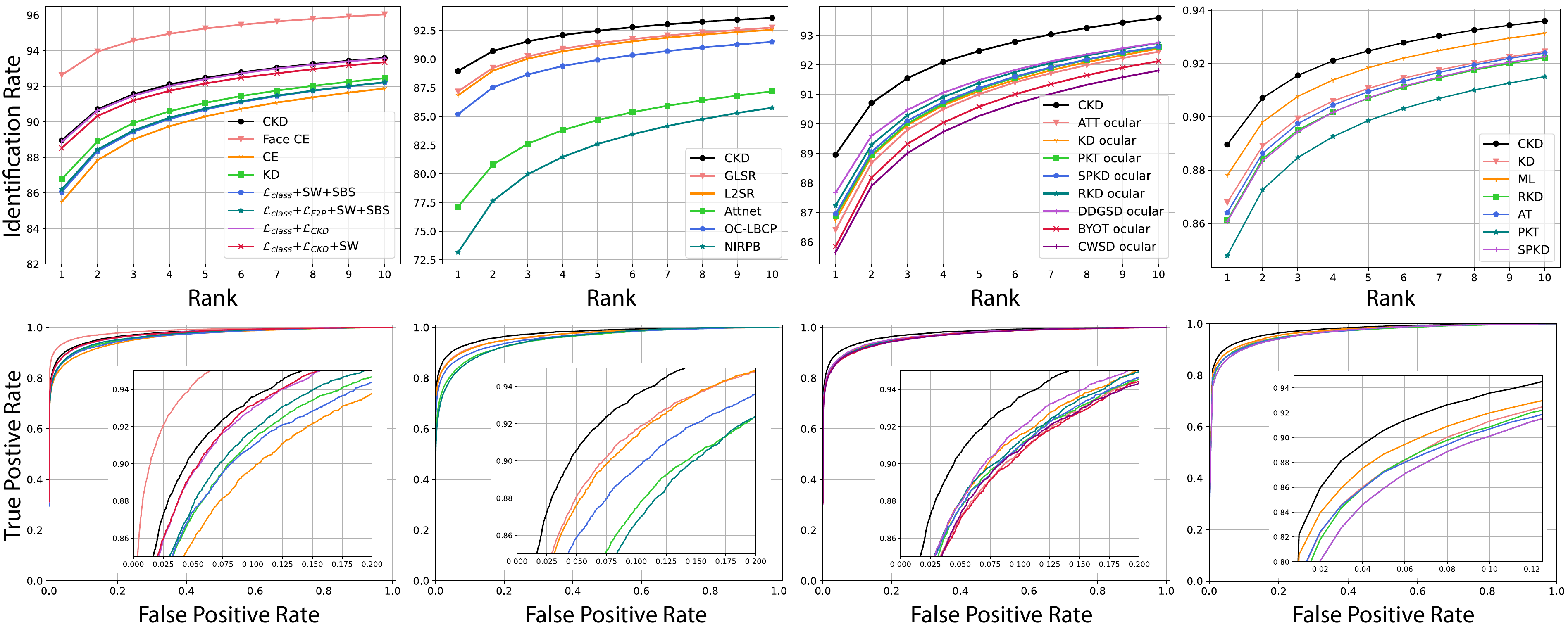}
\caption{
The performance comparison for periocular identification (the first row) and verification (the second row) was measured by CMC and indicated by the ROC curve, respectively. The black line is of our proposed model CKD. (First column) The results of the ablation study are in Sec.~\ref{sec:exp_ablation}. (Second column) Experimental comparison with state-of-the-art periocular recognition methods in Sec.~\ref{sec:exp_comp_peri}. (Third column) Experimental comparison with different KD methods enhances the periocular network by self-distillation from ocular images (Sec.~\ref{sec:exp_comp_kd}). (Fourth column) Experimental comparison with different KD methods enhances the periocular network by knowledge distillation from face images (Sec.~\ref{sec:exp_comp_kd}).
}
\label{fig:results}
\end{figure*}

\subsection{Experimental Results for Comparison}

\subsubsection{Comparison to Periocular Recognition State-of-the-Art Models}
\label{sec:exp_comp_peri}

This section compares CKD with other state-of-the-art (SOTA) targeting periocular identification or verification under the same evaluation protocol, including Attnet \cite{zhao2018improving}, OC-LBCP \cite{tiong2019periocular}, NIRPB \cite{hwang2020near}, L2SR \cite{jung2020l2sr}, and GLSR \cite{jung2020glsr}, CMBNet \cite{ng2022conditional}, and ASTNet \cite{borza2023adaptive}. Similar to CKD, we re-implemented these deep learning-based SOTA with ResNet-18 based on our training dataset.

Overall, Attnet \cite{zhao2018improving}, OC-LBCP \cite{tiong2019periocular}, NIRPB \cite{hwang2020near}, and L2SR \cite{jung2020l2sr} are only trained with periocular images which show its limited performance from restricted information to use. Attnet \cite{zhao2018improving} implements a pretrained attention module focusing on the eyebrow and ocular region trained on a distance-driven cross-entropy loss. However, experiments were done in a controlled environment and heavily relied on the performance of the attention module resulting in lower performance than CKD. NIRPB \cite{hwang2020near}, and OC-LBCP \cite{tiong2019periocular} use additional information such as feature descriptors or near-infrared to complement the insufficient periocular information, but using the many faces information shown with CKD shows better performance. Finally, GLSR \cite{jung2020glsr} uses face images to supplement information for periocular but is highly dependent on pretrained models. CMBNet \cite{ng2022conditional} jointly trains face and periocular in a conditional manner equipped with contrastive loss. ASTNet \cite{borza2023adaptive} adds a local branch that attends to the informative area of the periocular region using learned transformation functions. Our proposed method learns and distills the relational knowledge between face and periocular adaptively and shows better results. Table \ref{table:compare_periocular_sota} shows that CKD outperforms other periocular SOTA models by a remarkable margin.

\subsubsection{Comparison to Other Knowledge Distillation Models}
\label{sec:exp_comp_kd}

In this section, we compare CKD to other prevailing KD models (\cite{hinton2015distilling}, \cite{zagoruyko2016paying}, \cite{tung2019similarity}, \cite{park2019relational}, \cite{passalis2018learning}, \cite{zhang2019your}, \cite{xu2019data}, \cite{yun2020regularizing}) designated neither for periocular nor other biometrics, and we summarize these models in Table \ref{table:compare_kd} and Fig.~\ref{fig:results}. We first learn the periocular-to-periocular distillation from a large cumbersome network to a compressed network (training KD for network compression \cite{hinton2015distilling}, \cite{zagoruyko2016paying}, \cite{tung2019similarity}, \cite{park2019relational}, \cite{passalis2018learning}), or within itself (also referred to as self-distillation \cite{zhang2018deep}, \cite{xu2019data}, \cite{yun2020regularizing}). After that, the face-to-periocular KD methods are analyzed. As most of the KD models exercise for only a single data type, we twist them into the face to the periocular mode in our experiments. This demonstrates that the naïve execution of KD methods for face-to-periocular distillation does not perform well.

\noindent
\textbf{Periocular-to-Periocular Distillation} For exploration, ResNet-50 serves as the teacher model trained with the same setting as the vanilla periocular model trained only with cross-entropy loss. The student network, on the other hand, recruits the ResNe-18 backbone trained according to Sec.~\ref{sec:exp_set}. For the self-distillation models \cite{zhang2019your}, \cite{xu2019data}, \cite{yun2020regularizing}, only ResNet-18 is considered.

We train eight different KD models to compare in total. According to Table \ref{table:compare_kd}, all KD models show improvement in performance in varying degrees compared with the CE baseline. As mentioned in previous sections, \cite{hinton2015distilling} generates soft target prediction from a larger pre-trained fixed model and transfers the knowledge, but using a fixed model and doesn't consider the relational information between face and periocular and shows its limit. Likewise, \cite{zagoruyko2016paying} merely distills the fixed teacher network's attention map of intermediate features to the student network to induce similar attention patterns. \cite{tung2019similarity} \cite{park2019relational} \cite{tung2019similarity} distills the relational statistics of intermediate feature activations computed between different samples within the batch using different metrics but shows lower performance compared with CKD. Recent self-distillation models \cite{zhang2019your}, \cite{xu2019data}, \cite{yun2020regularizing} makes marginal gains, as shown in the table. Still, CKD performs better by fully utilizing the information of the face, which contains more salient features.

\begin{figure}[t]
\centering
\includegraphics[width=0.995\linewidth]{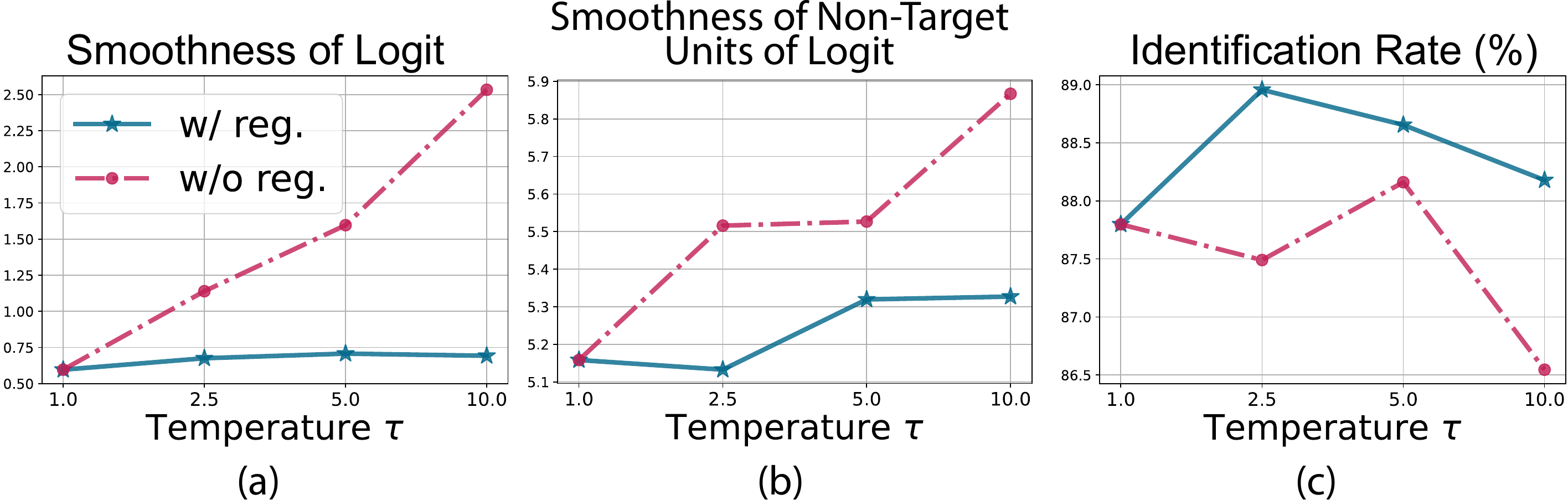}
\caption{
Here, the model with the regularizer is our full model CKD.
(a) Smoothness of the face network logit $\mathbf{z}^F$ measured by the entropy of its corresponding posterior (softmax output). (b) Smoothness of the non-target components $z^F_k$ ($k \neq y$) of the logit $\mathbf{z}^F$ measured by the entropy of softmax output of $(z^F_k)_{k \neq y}$ (c) Corresponding identification rates (measured in CMC rank-1) in average for the benchmark datasets. The results indicate two aspects of the sparsity-oriented regularizer $R(\mathbf{z}^F)$: First, as shown in (a), over-smoothing by increasing the temperature $\tau$ is prevented by the regularizer. Secondly, for a fixed temperature in (b), the inclusion of the regularizer increases the sparsity of non-target components of the logit, indicating that the regularizer stores inter-class relationship information in the non-target components of the prediction output $p^F_k$ for $k\neq y$ by making them sparse.
}
\label{fig:ablate_reg}
\end{figure}

\noindent
\textbf{Face-to-Periocular Distillation} 
{
We examine in this section face-to-periocular distillation. We use ResNet-18 and ResNet-50 for the teacher to compare both teachers with the same and larger capacity. The teacher network of \cite{hinton2015distilling}, \cite{tung2019similarity}, and \cite{passalis2018learning} is realized by a pre-trained face model. However, the spatial dimension for the face attention map of \cite{zagoruyko2016paying} has to be adjusted since its size differs from the periocular attention map. We, therefore, downsample them with adaptive average pooling to have the same size as periocular. In this section, we want to show the novelty of how CKD distills from face to periocular and how the resulting model differs from the other KD variants.
}
Our revelation in Table 3 shows that all comparing KD models improve the base performance, except PKT \cite{passalis2018learning}. Given the gains, the performance gain is relatively marginal compared with the periocular to periocular distillation discussed in the preceding section. This suggests these methods are restricted to only the same data type distillation except for ML. ML distills well but does not correctly capture facial image discriminative relationship information. RKD, on the other hand, extracts relationship information only locally from the given mini-batch. We find that local information transfer by RKD from face to periocular is weak as it cannot handle data discrepancy between face and periocular. In contrast, the proposed CKD can extract profound global inter-class relationships from face images and effectively transfer them to the periocular network, demonstrating its superiority.

\subsubsection{Verification on General Face Protocol}
\label{sec:exp_general}
We conduct validation experiment on LFW\cite{huang2008labeled}, CALFW, CFP-FF\cite{sengupta2016frontal}, and AgeDB\cite{moschoglou2017agedb} which are widely used face verification dataset. Since there aren't periocular images available for these datasets, we crop the aligned images and sanitize by excluding poorly cropped images which are caused by invisible periocular due to excessive pose change and ill periocular alignment where such cases are irrelevant with our work using aligned periocular images. This results in 5,749 pairs for LFW, 5,569 pairs for CALFW, 6,564 pairs for CFP-FF, and 5,575 pairs for AgeDB. As shown in Tab.~\ref{table:general_periocular_eer}~\ref{table:general_periocular_auroc}, CKD significantly improves both AUROC and EER comparing to the model only trained with Cross Entropy loss often with remarkable margins. 

\begin{table}[t]
\centering
\caption{Verification AUROC results on LFW, CALFW, CFP-FF, and AgeDB datasets.}
\resizebox{0.75\linewidth}{!}{   
    \begin{tabular}{c l c c c c}
    \hline
    ~ & Method & LFW & CALFW & CFP-FF & AgeDB \\ 
    \hline
    \multirow{2}{*}{\rotatebox[origin=c]{90}{AUROC}}  & Periocular CE & 96.22 & 90.83 & 98.41 & 88.37 \\ 
    & Periocular CKD & \textbf{97.23} & \textbf{92.44} & \textbf{98.77} & \textbf{90.98} \\ 
    \hline
    \end{tabular}
}
\label{table:general_periocular_eer}

\end{table}

\begin{table}[t]
\centering
\caption{Verification EER results on LFW, CALFW, CFP-FF, and AgeDB datasets.}
\resizebox{0.75\linewidth}{!}{   
    \begin{tabular}{c l c c c c}
    \hline
    ~ & Method & LFW & CALFW & CFP-FF & AgeDB \\ 
    \hline
    \multirow{2}{*}{\rotatebox[origin=c]{90}{EER}} & Periocular CE & 9.51 & 17.2 & 5.64 & 19.19 \\ 
    & Periocular CKD & \textbf{7.69} & \textbf{14.54} & \textbf{4.6} & \textbf{16.59} \\ 
    \hline
    \end{tabular}
}
\label{table:general_periocular_auroc}
\end{table}

\subsubsection{CKD and Low Resolution Face Images}
\label{sec:exp_low_res}
In this section, we replace the periocular images and module to low face and apply CKD in order to show that our method is not only limited to improving periocular recognition performance. For the resolutions, we use $8\times8$ and $16\times16$ which are obtained through downsampling the high resolution face images, which are $128\times128$, and conseqeuntly umsampling back to $128\times128$. Besides the resolution of the input, the rest of the setup are identical as periocular experiments. We compare the identification and verification under the same protocol we have used in the previous experiments for each datasets. For both $8\times8$ and $16\times16$, CKD shows remarkable improvements as shown in Tab.~\ref{table:low_resolution}.

\begin{table}[t]
\centering
\caption{Identification and verificaiton on $8\times8$ and $16\times16$ resolution face recognition.
}
\resizebox{0.995\linewidth}{!}{   
\begin{tabular}{c l c c c c c c c}
\hline
~ & Resolution/Method & Eth & PF & FS & IW & AR & YTF & Avg \\ 
\hline
\multirow{4}{*}{\rotatebox[origin=c]{90}{Identification}} & $128 \times 128$ CE & 97.48  & 99.19  & 98.66  & 90.18  & 91.74  & 78.52  & 92.63  \\
& $ 16 \times 16$ CE & 95.56  & 97.70  & 97.61  & 82.60  & 91.74  & 75.54 & 90.12 \\
& $16 \times 16$ CKD & \textbf{96.42 } & \textbf{98.17 } & \textbf{97.61 } & \textbf{84.62 } & \textbf{93.68 } & \textbf{78.846} & \textbf{91.56} \\
& $8 \times 8$ CE & 90.75  & 93.97  & 94.64  & 69.23  & 84.64  & 65.32 & 83.09  \\
& $8\times8$ CKD  & \textbf{93.88 } & \textbf{96.05 } & \textbf{95.47 } & \textbf{75.38 } & \textbf{89.53 } & \textbf{72.81} & \textbf{87.18} \\ 
\hline
\multirow{4}{*}{\rotatebox[origin=c]{90}{Verification}}  & $128 \times 128$ CE & 4.06  & 3.50  & 1.58  & 4.79  & 4.23  & 11.84  & 5.00 \\
& $16 \times 16$ CE & 22.80  & 30.64  & 29.15  & 30.38  & 23.61  & 24.83  & 26.90 \\
& $16\times16$ CKD  & \textbf{15.05 } & \textbf{22.95 } & \textbf{20.68 } & \textbf{23.99 } & \textbf{14.31 } & \textbf{18.73 } & \textbf{19.28} \\ 
& $8 \times 8$ CE & 28.57  & 34.81  & 33.44  & 34.43  & 23.98  & 28.78  & 30.67 \\
& $8 \times 8$ CKD & \textbf{25.08 } & \textbf{32.75 } & \textbf{30.87 } & \textbf{31.80 } & \textbf{22.87 } & \textbf{24.89 } & \textbf{28.04} \\
\hline
\end{tabular}
}
\label{table:low_resolution}
\end{table}

\subsection{Empirical Analysis of CKD}

We conduct a detailed analysis of our model CKD in this part of the experiments. We first analyze the novel sparsity-oriented regularizer that has been theoretically observed in the equivalent form of the CKD total objective loss (Sec.~\ref{sec:exp_anal_reg}). Then, in Sec.~\ref{sec:exp_anal_cons}, we analyze the impact of consistency in the process of knowledge distillation, observing how it affects the embedding structures, effectiveness of knowledge transfer, prediction calibration, and the cluster quality of embeddings.


\subsubsection{Analysis of the Sparsity-Oriented Regularizer}
\label{sec:exp_anal_reg}

\noindent
\textbf{Settings.} To analyze the impact of the sparsity-oriented regularizer hidden in the CKD model loss, we train two models, our CKD as described in Sec.~\ref{sec:exp_set} and the model trained without the regularizer. The latter model is trained only with the label-smoothing loss $H(\widetilde{\mathbf{y}}, \mathbf{p}^F) + H(\widetilde{\mathbf{y}}^F, \mathbf{p})$ described in Eq.~\eqref{eq:loss_equiv}.

\noindent
\textbf{Analysis.} 
The comparison between these two models is given in the result of Fig.~\ref{fig:ablate_reg}. Here, we measure the smoothness of the face network prediction logit $\mathbf{z}^F$ by the entropy of its softmax output $H(\mathbf{p}^F)$, while the smoothness of the non-target components of the logit is measured by the entropy of the softmax output of the non-target components; namely, $H(\mathbf{q}^F)$ where $\mathbf{q}^F$ is the softmax output of $\mathbf{z}^F_{[1:K] \setminus \{y\}} = (z^F_1, \dots, z^F_{y-1}, z^F_{y+1}, \dots, z^F_K) \in \mathbb{R}^{K-1}$. 

The result shown in Fig.~\ref{fig:ablate_reg}a indicates that the regularizer prevents over-smoothing of the prediction even at high temperature $\tau$. (The regularizer has no effect at $\tau {=} 1$ since it is a fixed constant.)
On the other hand, the graph in Fig.~\ref{fig:ablate_reg}b shows that the prediction posterior involves more sparse behavior for the non-target classes $k {\neq} y$ as illustrated in Figs.~\ref{fig:effect_reg} and \ref{fig:comp_methods}. In other words, the non-target posterior is large for only specific identities $k{\neq}y$ when the regularizer is in effect. In contrast, the absence of the regularizer makes all non-target posteriors similar and uniform. It indicates that the regularizer extracts inter-class relationship information from face images and stores it in the non-target posterior.

\begin{figure}
\centering
\includegraphics[width=0.85\linewidth]{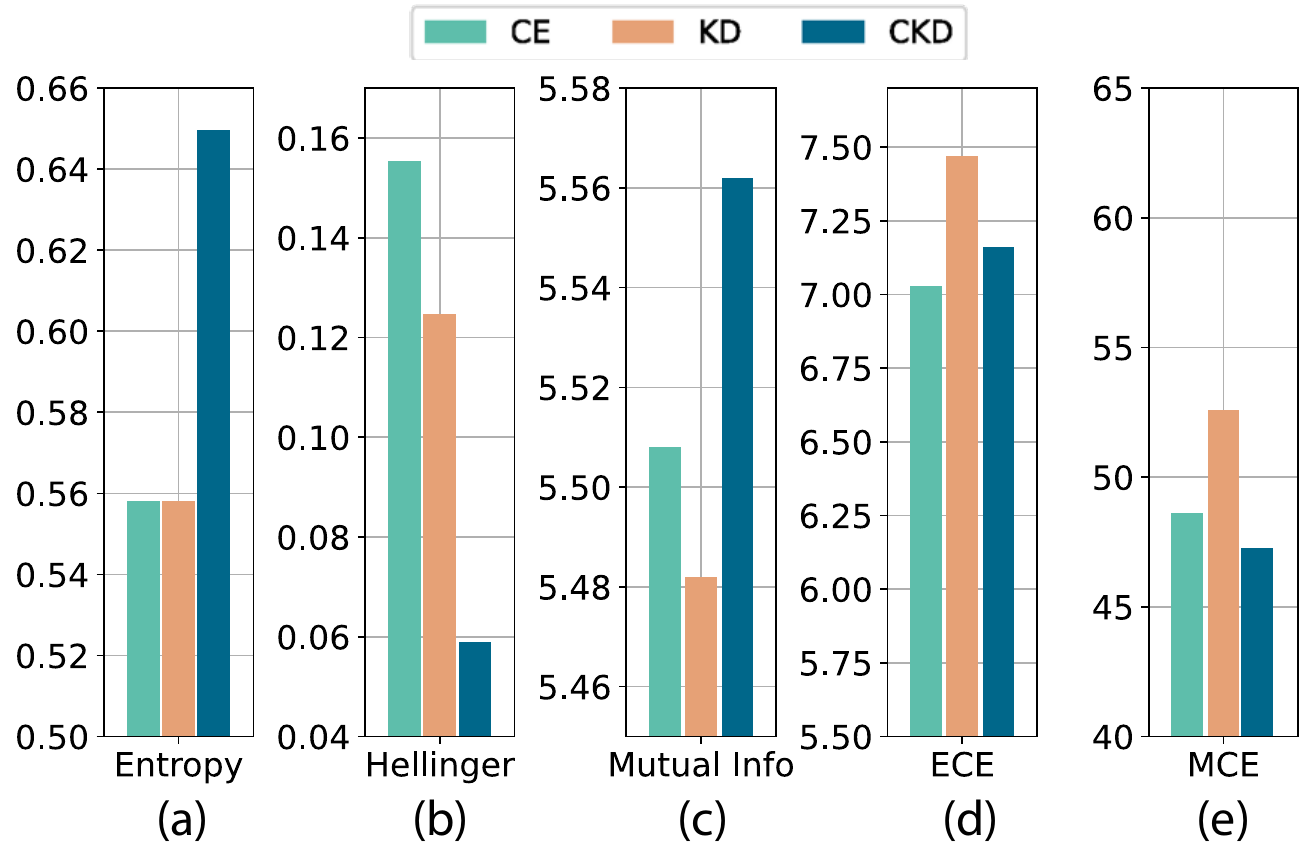}
\caption{
(a) Entropy $H(\mathbf{p}^F) = - \sum_{k=1}^K p^F_k \log p^F_k$ of the posterior of face network averaged over validation samples. Applying KD does not increase entropy, while CKD does. (b) Hellinger distance between the face network posterior $\mathbf{p}^F$ and the periocular network posterior $\mathbf{p}$, showing that $\mathbf{p}$ is very close to $\mathbf{p}^F$ in CKD compared to CE and KD baselines. The distance is averaged over validation samples.
(b) The mutual information between $\mathbf{p}$ and $\mathbf{p}^F$ averaged over validation samples, indicating high mutual information between face and periocular posteriors.
(d, e) Expected Calibration Error (ECE) and Maximum Calibration Error (MCE) quantify the discrepancy between prediction uncertainty and error rate. The CE baseline and CKD are on par in ECE, while the standard KD largely deteriorates the metric. In MCE, CKD reduced the calibration error compared to the CE and KD baselines.
}
\label{fig:stat}
\end{figure}

\begin{figure}[t]
\centering
\includegraphics[width=0.9\linewidth]{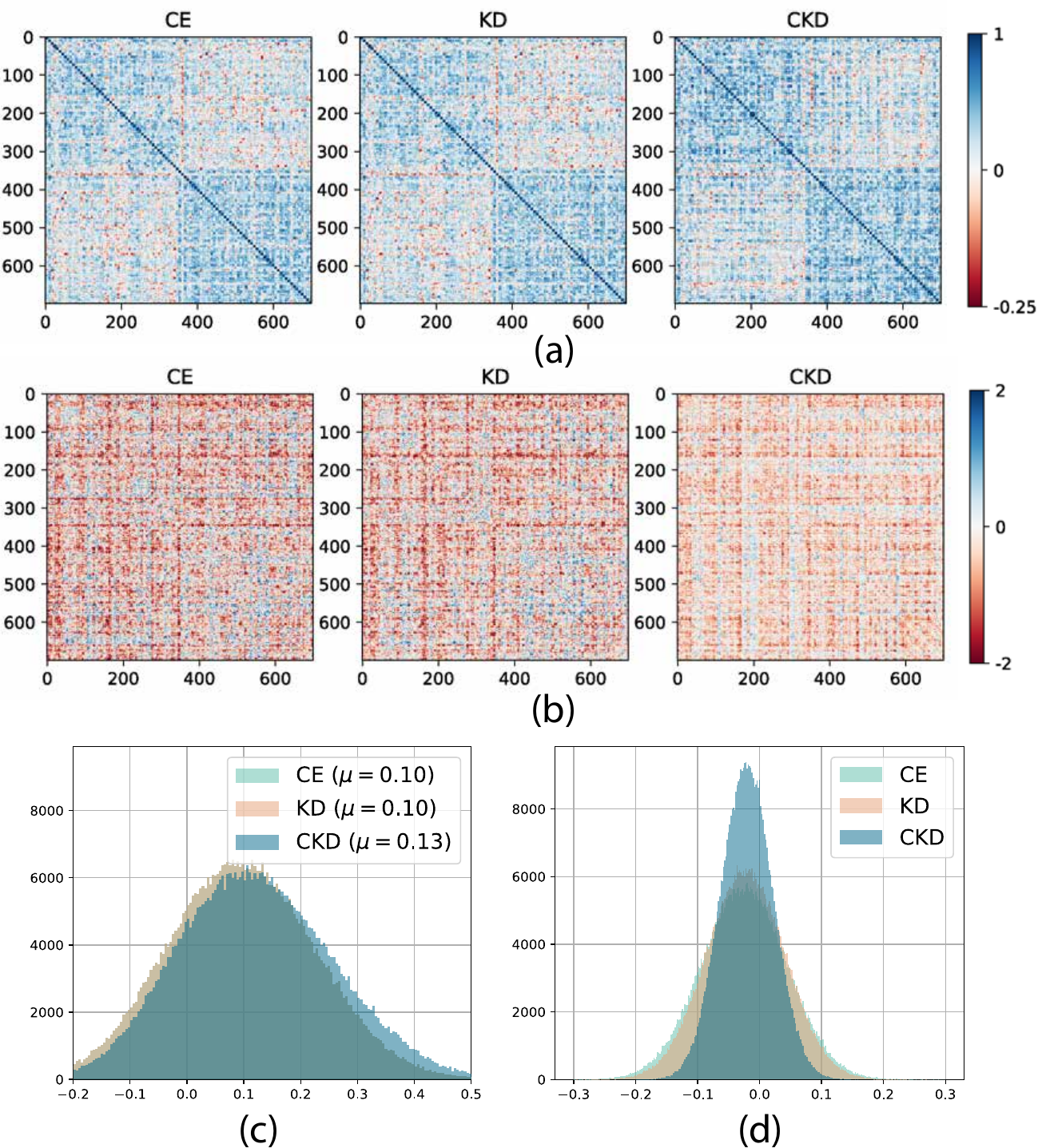}
\caption{
(a) The gram matrices of face prototype embeddings from CE, KD, and CKD. The $(i,j)$-th element of a gram matrix quantifies the cosine similarity between $i$-th class prototype embedding and $j$-th class prototype embedding. The gram matrix of CKD is denser (non-sparse) and contains larger elements, capturing a more meaningful identity-wise relational structure than the CE and KD. The CE baseline and CKD are on par with ECECE
(b) The difference between the gram matrix of face prototype embedding and periocular prototype embedding, where $(i,j)$-th element corresponds to the classes $i$ and $j$. 
(c) The distributions of cosine similarities in the gram matrices of (a), where the mean of CKD is the largest. 
(d) The distributions of the elements in the different matrices of (b). 
(b) and (d) shows that the difference is significantly smaller in CKD. In other words, the structural similarity between face and periocular embeddings is higher in CKD than in CD and KD baselines.
}
\label{fig:structure}
\end{figure}

\subsubsection{Analysis of Consistency in Distillation of Knowledge}
\label{sec:exp_anal_cons}

\noindent
\textbf{Settings.} To analyze the impact of consistency in knowledge distillation, we train three models as described in Sec.~\ref{sec:exp_set}: our proposed model CKD, a vanilla KD model, and the CE baseline trained only over periocular images. All models are trained on Training Dataset described in Sec.~\ref{sec:exp_data} and evaluated over corresponding validation and test datasets.

\subsubsection*{(a) Consistency in CKD enables the face network to attain profound inter-class relationship information of subject identities from face images.}
\label{sec:theory_entropy}
Suggested by the theoretical analysis in Sec.~\ref{sec:theory_entropy} with Corollary \ref{thm:entropy}, CKD acts as a learned-label smoothing mechanism with the sparsity-oriented regularizer, extracting profound inter-class relationship information from face images.
This is empirically evidenced by measuring the entropy of class posterior from the face network averaged per sample in Fig.~\ref{fig:stat}a. Moreover, the gram matrix of embeddings given in Fig.~\ref{fig:structure}a exhibits relatively less sparsity for the face embeddings of CKD, and the distribution of the embedding similarities in CKD has a larger mean than the baseline models (Fig.~\ref{fig:structure}c). The observations indicate that the face network of CKD learns more meaningful identity-wise relational information in the embedding space of face images than the CE and KD baselines.

\subsubsection*{(b) Consistency in CKD enables the effective transfer of knowledge from the face to the periocular network.}
As shown in Fig.~\ref{fig:loss_dbi}, imposing consistency between face and periocular networks in CKD makes the model minimize the face-to-periocular distillation loss $D_{KL}(\mathbf{p}^F \parallel \mathbf{p})$ much more effectively than the KD counterpart. As a result, CKD attains much higher statistical similarity and mutual information between face and periocular class prediction posteriors (Fig.~\ref{fig:stat}bc). Moreover, the transfer of consistent knowledge increases the structural similarity between face and periocular embeddings, as indicated by the difference between the gram matrix of face embeddings and that of periocular embeddings (Fig.~\ref{fig:structure}b). The corresponding distribution of the difference between the embedding similarity of the face and that of the periocular exhibits the same trend (Fig.~\ref{fig:structure}d). Overall, the profound identity-wise relational information attained from face images is effectively transferred to the periocular network via learning consistency between face and periocular.

\subsubsection*{(c) Consistency in CKD induces effective calibration of periocular network prediction, well-clustered periocular embeddings, and robust periocular recognition in the wild.}
By learning profound informative relational information of subject identities and regularizing the periocular prediction by the label-smoothing mechanism of CKD, the periocular network of CKD can attain effective prediction calibration over Training Dataset identities, as shown by Fig.~\ref{fig:stat}de. We note that Expected Calibration Error (ECE) and Maximum Calibration Error (MCE), measured in Fig.~\ref{fig:stat}d and Fig.~\ref{fig:stat}e, respectively, quantify the discrepancy between the uncertainty of prediction and the corresponding prediction error rate \cite{guo2017calibration}.

Along with the effective calibration of periocular predictions, effective transfer of the face knowledge involving profound identity-wise relation and label-smoothing mechanism of CKD make the periocular network resilient to overfitting (Fig.~\ref{fig:loss_dbi}b).
Moreover, transferring consistent knowledge of the face forces the periocular network to obtain well-clustered embeddings of periocular images (Fig.~\ref{fig:loss_dbi}c). Overall, the periocular embeddings of CKD exhibit robust periocular identification and verification in wild environments.

\begin{figure}[t]
\centering
\includegraphics[width=0.995\linewidth]{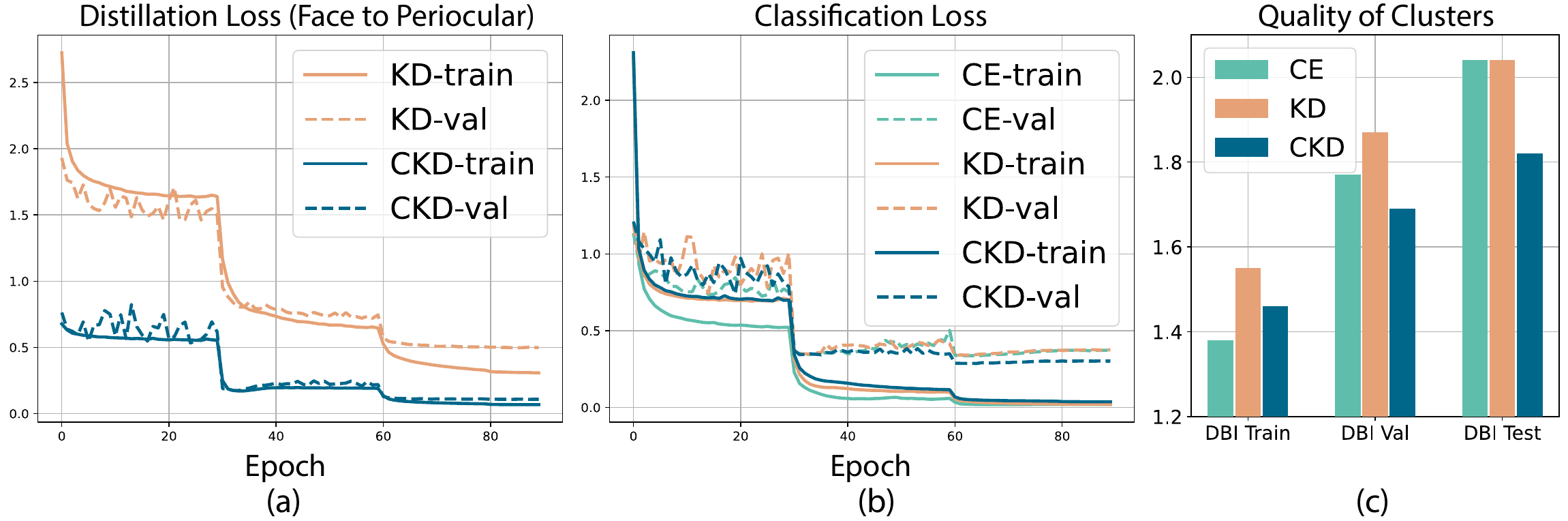}
\caption{
(a) The face-to-periocular distillation loss $D_{KL}(\mathbf{p}^F \parallel \mathbf{p})$. Although KD minimizes this distillation loss, it \textit{results in a suboptimal knowledge transfer (higher distillation loss in validation)} as the face logits are prepared to disregard the periocular logits.
(b) The periocular classification loss $- p_k \log p_k$. Effective transfer of consistent knowledge enables CKD to \textit{generalize better (lower validation loss)} than the CE and KD baselines.
(c) As a result, the periocular embeddings of CKD form effective clusters. Here, the cluster quality is measured by the Davies-Bouldin Index (DBI), which quantifies the ratio of intra-class variation to inter-class variation.
}
\label{fig:loss_dbi}
\end{figure}

\section{Conclusion}
We presented a periocular embedding learning method enhanced by means of consistent knowledge distillation (CKD) that is robust under wild, unconstrained environments. CKD imposes temperature-based consistency between face and periocular images through prediction and feature layers, thereby effectively transferring global inter-class relationship information from face images to the periocular network and inducing robust periocular recognition. Furthermore, the design of train loss in CKD allows it to be trained end-to-end with a single stage and single hyperparameter, making it achieve state-of-the-art performance over six standard periocular benchmarks. Moreover, we have provided extensive theoretical and experimental analyses on the distillation principles of CKD and found that the novel sparsity-oriented regularizer hidden in the consistency constraint of CKD helps the face network to effectively extract the global inter-class relationship of subject identities from face images.


\bibliographystyle{elsarticle-num} 
\bibliography{refs}


%
%
%


\end{document}